  \newlength{\defbaselineskip}
  \date{}
\newcommand{\mailto}[1]{\href{mailto:#1@stanford.edu}{#1}}
\title{Scan Order in Gibbs Sampling: Models in Which it Matters and Bounds on How Much}
\author{
  Bryan He,
  Christopher De Sa,
  Ioannis Mitliagkas, and
  Christopher R{\'e} \\
  Stanford University \\
  \texttt{\{\mailto{bryanhe},\mailto{cdesa},\mailto{imit},\mailto{chrismre}\}@stanford.edu}
}
  \pgfplotsset{compat=1.11}
\renewcommand{\arraystretch}{1.1}
\theoremstyle{definition}
\theoremstyle{plain}
\newtheorem{claim}{Claim}
\newcommand{\todo}[1]{\iftoggle{TODO}{\textcolor{red}{#1}}}
\begin{document}

\maketitle

\begin{abstract}
Gibbs sampling is a Markov Chain Monte Carlo sampling technique that
iteratively samples variables from their conditional distributions.
There are two common scan orders for the variables:
random scan and systematic scan.
Due to the benefits of locality in hardware, systematic scan is commonly used,
even though most statistical guarantees are only for random scan.
While it has been conjectured that the mixing times of random scan and
systematic scan do not differ by more than a logarithmic factor, we show by
counterexample that this is not the case, and we prove that
that the mixing times do not differ by more than a polynomial factor
under mild conditions.
To prove these relative bounds, we introduce a method of augmenting the state space to
study systematic scan using conductance.
\end{abstract}


\section{Introduction}
\label{sec:intro}

Gibbs sampling, or Glauber dynamics, is a Markov chain Monte Carlo
method that draws approximate samples from multivariate distributions
that are difficult to sample directly~\mbox{[\citealp{geman1984paml}; \citealp[p.~40]{levin2009ams}]}.
A major use of Gibbs sampling is marginal inference: the estimation of the marginal distributions of some variables of interest \cite{gelfand1990jasa}.
Some applications include 
various computer vision tasks~\cite{zhu1998filters,geman1984paml,zhang2001segmentation}, information extraction \cite{finkel2005acl}, and latent Dirichlet allocation for topic modeling \cite{griffiths2004pnas}.
Gibbs sampling is simple to implement and quickly produces accurate samples for many models, so it is widely used and available in popular libraries such as OpenBUGS \cite{lunn2009bugs}, FACTORIE \cite{mccallum2009nips}, JAGS \cite{plummer2003jags}, and MADlib~\cite{hellerstein2012vldb}.

\begin{algorithm}[htbp]
  \caption{Gibbs sampler}
  \label{alg:Gibbs}
  \begin{algorithmic}
    \INPUT Variables $x_i$ for $1 \leq i \leq n$, and target distribution $\pi$
    \STATE Initialize $x_1,\ldots,x_n$
    \LOOP
      \STATE Select variable index $s$ from $\{1,\ldots,n\}$
      \STATE Sample $x_s$ from the conditional distribution $\mathbf{P}_\pi\left(X_s\mid X_{\{1,\ldots,n\}\setminus\{s\}}\right)$
    \ENDLOOP
  \end{algorithmic}
\end{algorithm}

Gibbs sampling (Algorithm \ref{alg:Gibbs}) iteratively selects
a single variable and resamples it from its conditional distribution, given
the other variables in the model.
The method that selects the variable index to sample
($s$ in Algorithm~\ref{alg:Gibbs}) is called the \textit{scan order}.
Two scan orders are commonly used: random scan and systematic scan (also known as deterministic or sequential scan).
In random scan, the variable to sample is selected uniformly and independently
at random at each iteration.
In systematic scan, a fixed permutation is selected, and the variables are
repeatedly selected in that order.
The existence of these two distinct options raises an obvious question---which scan order produces accurate samples more quickly?
This question has two components: hardware efficiency (how long does each
iteration take?) and statistical efficiency (how many iterations are needed
to produce an accurate sample?).

From the hardware efficiency perspective, systematic scans are clearly superior \cite{zhang2013sigmod,smola2010vldb}.
Systematic scans have good spatial locality because they access the variables in linear order, which makes their iterations run faster on hardware.
As a result, systematic scans are commonly used in practice.

Comparing the two scan orders is much more interesting from the perspective of
statistical efficiency, which we focus on for the rest of this paper.
Statistical efficiency is measured by the \textit{mixing time}, which is the
number of iterations needed to obtain an accurate
sample~\cite[p.~55]{levin2009ams}.
The mixing times of random scan and systematic scan have been studied,
and there is a longstanding
conjecture~\mbox{[\citealp{diaconis2013bernoulli}; \citealp[p.~300]{levin2009ams}]}
that systematic scan (1) never mixes more than a constant factor slower
than random scan and (2) never mixes more than a logarithmic factor
faster than random scan.  This conjecture implies that the choice of scan
order does not have a large effect on performance.

Recently, \citet{roberts2015jsp}
described a model in which systematic scan mixes more slowly
than random scan by a polynomial factor; this disproves direction (1) of this
conjecture.  Independently, we constructed
other models for which the scan order has a significant effect on mixing time.
This raises the question: 
what are the true bounds on the difference between these mixing times?
In this paper, we address this question and make the following contributions.
\begin{itemize}
  \item In Section \ref{sec:examples}, we study the effect of the variable permutation chosen for systematic scan on the mixing time.
    In particular,
    in Section \ref{ssSequenceDependencies}, we construct a model
    for which a systematic scan mixes a
    polynomial factor faster than random scan, disproving direction (2)
    of the conjecture, and
    in Section \ref{ssTwoIslands},
    we construct a model for which the systematic scan with the worst-case
    permutation results in a mixing time that is slower by a polynomial
    factor than both the best-case systematic scan permutation and random scan.
  \item In Section \ref{sec:experiments}, we empirically verify the mixing times of the models we construct,
    and we analyze how the mixing time changes as a function of
    the permutation.
  \item In Section \ref{sec:bounds}, we prove a weaker version of the
    conjecture described above,
    providing relative bounds on the mixing times of random and systematic
    scan.  Specifically, under a mild condition, different scan orders can
    only change the mixing time by a polynomial factor.
    To obtain these bounds, we introduce a method of augmenting the state space
    of
    Gibbs sampling so that the method of conductance can be applied to analyze
    its dynamics.
\end{itemize}

\section{Related Work}

Recent work has made progress on analyzing the mixing time of Gibbs sampling,
but there are still some major limitations to our understanding.
In particular, most known results are only for specific models or are only for random scan.
For example, mixing times are known for Mallow's
model \cite{benjamini2005ams,diaconis2000michigan}, and colorings of a
graph \cite{dyer2006aap} for both random and systematic scan, but these 
are not applicable to general models.
On the other hand, random scan has been shown to mix in polynomial time for
models that satisfy structural conditions -- such as having close-to-modular
energy functions \cite{gotovos2015nips} or having bounded hierarchy width and
factor weights \cite{desa2015nips} -- but corresponding results for for
systematic scan are not known.
The major exception to these limitations is Dobrushin's condition, which
guarantees $O(n\log n)$ mixing for both random scan and systematic
scan \cite{hayes2006focs,dyer2008cpc}.
However, there are many models of interest with close-to-modular energy
functions or bounded hierarchy width that do not satisfy Dobrushin's
condition.



A similar choice of scan order appears in stochastic gradient descent (SGD), where the standard SGD algorithm uses random scan, and the incremental gradient method (IGM) uses systematic scan.
In contrast to Gibbs sampling, avoiding ``bad permutations'' in the IGM is known to be important to ensure fast convergence \cite{recht2012colt,gurbuzbalaban2015arxiva}. 
In this paper, we bring some intuition about the existence of bad permutations from SGD to Gibbs sampling.

\section{Models in Which Scan Order Matters}
\label{sec:examples}

Despite a lack of theoretical results regarding the effect of scan order on
mixing times, it is generally believed that scan order only has a small effect
on mixing time.
In this section, we first define relevant terms and 
state some common conjectures regarding scan order.
Afterwards, we give several counterexamples showing that the scan order can
have asymptotic effects on the mixing time.

The \textit{total variation distance} between two probability distributions $\mu$ and $\nu$ on $\Omega$ is \cite[p.~47]{levin2009ams}
\begin{align*}
\|\mu-\nu\|_{\textrm{TV}} = \max_{A\subseteq \Omega}|\mu(A) - \nu(A)|.
\end{align*}
The \textit{mixing time} is the minimum number of steps needed to guarantee that the total variation distance between the true and estimated distributions is below a given threshold $\epsilon$ from any starting distribution.
Formally, the \textit{mixing time} of a stochastic process $P$ with transition matrix $P^{(t)}$ after $t$ steps and stationary distribution $\pi$ is \cite[p.~55]{levin2009ams}
\begin{align*}
t_{\textrm{mix}}(P,\epsilon) = \min\left\{t : \max_{\mu}\|P^{(t)}\mu - \pi\|_{\textrm{TV}} \leq \epsilon\right\},
\end{align*}
where the maximum is taken over the distribution $\mu$ of the initial state of
the process.  When comparing the statistical efficiency of systematic scan
and random scan, it would be useful to establish, for any systematic scan
process $S$ and random scan process $R$ on the same $n$-variable model,
a relative bound of the form
\begin{align}
  \label{eqnRelBoundSeqVsRand}
  F_1(\epsilon, n, t_{\textrm{mix}}(R,\epsilon))
  \leq
  t_{\textrm{mix}}(S,\epsilon)
  \leq
  F_2(\epsilon, n, t_{\textrm{mix}}(R,\epsilon))
\end{align}
for some functions $F_1$ and $F_2$.  Similarly, to bound the effect
that the choice of permutation can have on the mixing time, it would be
useful to know, for any two systematic scan processes $S_{\alpha}$ and
$S_{\beta}$ with different permutations on the same model, that
for some function $F_3$,
\begin{align}
  \label{eqnRelBoundSeqVsSeq}
  t_{\textrm{mix}}(S_\alpha, \epsilon)
  \leq
  F_3(\epsilon, n, t_{\textrm{mix}}(S_\beta, \epsilon)).
\end{align}

\citet{diaconis2013bernoulli} and \citet[p.~300]{levin2009ams} conjecture that
systematic scan is never more than a constant factor slower or a
logarithmic factor faster than random scan.  This is equivalent to choosing
$F_1(\epsilon, n, t) = C_1(\epsilon) \cdot t \cdot (\log n)^{-1}$ and
$F_2(\epsilon, n, t) = C_2(\epsilon) \cdot t$ in the inequality in
(\ref{eqnRelBoundSeqVsRand}), for some functions $C_1$ and $C_2$.
It is also commonly believed that all systematic scans mix at the same
asymptotic rate, which is equivalent to choosing
$F_3(\epsilon, n, t) = C_3(\epsilon) \cdot t$ in (\ref{eqnRelBoundSeqVsSeq}).

These conjectures imply that using systematic scan instead of random scan
will not result in significant consequences, at least asymptotically,
and that the particular permutation used for systematic scan is not important.
However, we show that neither conjecture is true by constructing models
(listed in Table \ref{table:list}) in which the scan order has substantial
asymptotic effects on mixing time.

In the rest of this section, we go through two models in detail to highlight
the diversity of behaviors that different scan orders can have.
First, as a warm-up, 
we construct a model, which we call the \emph{sequence of
dependencies} model, for which a single ``good permutation'' of systematic scan
mixes faster, by a polynomial factor, than both random scan and systematic
scans using most other permutations.  This serves as a counterexample to the
conjectured lower bounds (i.e. the choice of $F_1$ and $F_3$) on the mixing
time of systematic scan.  Second, we construct a model, the \emph{two islands}
model, in which there is a small set of ``bad permutations'' that
mix very slowly in comparison to random scan and most other systematic scans.
This contradicts the conjectured upper bounds (i.e. the choice of $F_2$
and $F_3$).  For completeness, we also discuss the \emph{discrete pyramid}
model introduced by \citet{roberts2015jsp} (which contradicts the 
conjectured choice of $F_2$).  Table \ref{table:list} also lists the mixing
times of a few additional models we
constructed: these models further explore the space of asymptotic
comparisons among scan orders, but for brevity we defer them to the appendix.

\begin{table}[tb]
\caption{Models and Approximate Mixing Times}
\label{table:list}
\begin{center}
\begin{tabular}{@{}lccc@{}}
\toprule
Model & $t_{\textrm{mix}}(R)$ & $\displaystyle \min_{\alpha}t_{\textrm{mix}}(S_{\alpha})$ & $\displaystyle \max_{\alpha}t_{\textrm{mix}}(S_{\alpha})$ \\
\midrule
Sequence of Dependencies & $n^2$ & $n$ & $n^2$ \\
Two Islands              & $2^n$ & $2^n$ & $n2^n$ \\
Discrete Pyramid \cite{roberts2015jsp}        & $n$ & $n^3$ & $n^3$ \\
Memorize and Repeat      & $n^3$ & $n^2$ & $n^2$ \\
Soft Dependencies        & $n^{3/2}$ & $n$ & $n^2$ \\
\bottomrule
\end{tabular}
\end{center}
\end{table}

\iftoggle{nips}{
  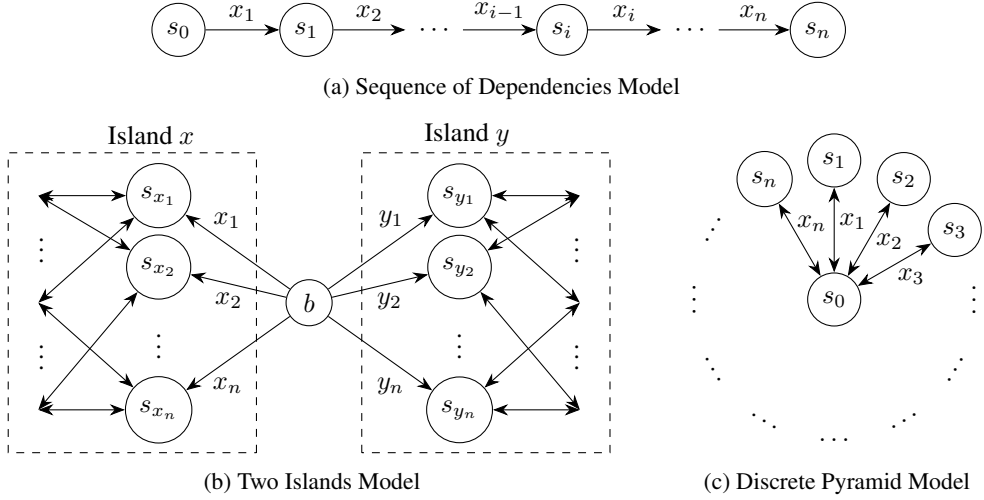
\begin{figure}[tb]
  \begin{center}
  \begin{subfigure}[b]{\textwidth}
  \begin{center}
  \begin{tikzpicture}
  \newcommand*{\size}{0.6cm}
  \newcommand*{\x}{1.7cm}

  \node[circle, minimum size=\size, draw] at ( 0   * \x,  0 * \x) (1){$s_0$};
  \node[circle, minimum size=\size, draw] at ( 1   * \x,  0 * \x) (2){$s_1$};
  \node[circle, minimum size=\size      ] at ( 2   * \x,  0 * \x) (a){$\cdots$};
  \node[circle, minimum size=\size, draw] at ( 3   * \x,  0 * \x) (i){$s_{i}$};
  \node[circle, minimum size=\size      ] at ( 4   * \x,  0 * \x) (b){$\cdots$};
  \node[circle, minimum size=\size, draw] at ( 5   * \x,  0 * \x) (n){$s_n$};

  \draw[black, -{Stealth[scale=1.2]}] (1) -- (2) node[midway,above]{$x_1$};
  \draw[black, -{Stealth[scale=1.2]}] (2) -- (a) node[midway,above]{$x_2$};
  \draw[black, -{Stealth[scale=1.2]}] (a) -- (i) node[midway,above]{$x_{i-1}$};
  \draw[black, -{Stealth[scale=1.2]}] (i) -- (b) node[midway,above]{$x_i$};
  \draw[black, -{Stealth[scale=1.2]}] (b) -- (n) node[midway,above]{$x_n$};
\end{tikzpicture}
  \caption{Sequence of Dependencies Model}
  \label{fig:sequence}
  \end{center}
  \end{subfigure}
  \\
  \vspace{0.12cm}
  \begin{subfigure}[b]{0.625\textwidth} 
  \begin{center}
  \begin{tikzpicture}
  \newcommand*{\size}{0.5cm}
  \newcommand*{\y}{0.95}
  \newcommand*{\x}{2.0}
  \newcommand*{\ang}{30}

  \node[circle, minimum size=\size, draw, ] at (  0,             0) (b)  {$b$};

  \node[circle, minimum size=\size, draw, ] at (-\x,  1.5 * \y) (x1) {$s_{x_1}$};
  \node[circle, minimum size=\size, draw, ] at (-\x,  0.5 * \y) (x2) {$s_{x_2}$};
  \node[        minimum size=\size,       ] at (-\x, -0.5 * \y) (xi) {$\vdots$};
  \node[circle, minimum size=\size, draw, ] at (-\x, -1.5 * \y) (xn) {$s_{x_n}$};

  \node[circle, minimum size=\size, draw, ] at ( \x,  1.5 * \y) (y1) {$s_{y_1}$};
  \node[circle, minimum size=\size, draw, ] at ( \x,  0.5 * \y) (y2) {$s_{y_2}$};
  \node[        minimum size=\size,       ] at ( \x, -0.5 * \y) (yi) {$\vdots$};
  \node[circle, minimum size=\size, draw, ] at ( \x, -1.5 * \y) (yn) {$s_{y_n}$};

  \draw[black, -{Stealth[scale=1.2]}] (b) -- (x1) node[pos=0.6,above=0.1cm]{$x_1$};
  \draw[black, -{Stealth[scale=1.2]}] (b) -- (x2) node[pos=0.6,below]{$x_2$};
  \draw[black, -{Stealth[scale=1.2]}] (b) -- (xn) node[pos=0.6,below=0.1cm]{$x_n$}; 

  \draw[black, -{Stealth[scale=1.2]}] (b) -- (y1) node[pos=0.6,above=0.1cm]{$y_1$};
  \draw[black, -{Stealth[scale=1.2]}] (b) -- (y2) node[pos=0.6,below]{$y_2$};
  \draw[black, -{Stealth[scale=1.2]}] (b) -- (yn) node[pos=0.6,below=0.1cm]{$y_n$};

  \draw[black, {Stealth[scale=1.2]}-{Stealth[scale=1.2]}] (x1) -- (-1.8 * \x,  1.50 * \y);
  \draw[black, {Stealth[scale=1.2]}-{Stealth[scale=1.2]}] (x1) -- (-1.8 * \x,  0.00 * \y);

  \draw[black, {Stealth[scale=1.2]}-{Stealth[scale=1.2]}] (x2) -- (-1.8 * \x,  1.50 * \y);
  \draw[black, {Stealth[scale=1.2]}-{Stealth[scale=1.2]}] (x2) -- (-1.8 * \x, -1.50 * \y);

  \draw[black, {Stealth[scale=1.2]}-{Stealth[scale=1.2]}] (xn) -- (-1.8 * \x, -1.50 * \y);
  \draw[black, {Stealth[scale=1.2]}-{Stealth[scale=1.2]}] (xn) -- (-1.8 * \x,  0.00 * \y);

  \node[] at                                                      (-1.78* \x,  0.85 * \y) {$\vdots$};
  \node[] at                                                      (-1.78* \x, -0.65 * \y) {$\vdots$};

  \draw[black, {Stealth[scale=1.2]}-{Stealth[scale=1.2]}] (y1) -- ( 1.8 * \x,  1.50 * \y);
  \draw[black, {Stealth[scale=1.2]}-{Stealth[scale=1.2]}] (y1) -- ( 1.8 * \x,  0.00 * \y);

  \draw[black, {Stealth[scale=1.2]}-{Stealth[scale=1.2]}] (y2) -- ( 1.8 * \x,  1.50 * \y);
  \draw[black, {Stealth[scale=1.2]}-{Stealth[scale=1.2]}] (y2) -- ( 1.8 * \x, -1.50 * \y);

  \draw[black, {Stealth[scale=1.2]}-{Stealth[scale=1.2]}] (yn) -- ( 1.8 * \x, -1.50 * \y);
  \draw[black, {Stealth[scale=1.2]}-{Stealth[scale=1.2]}] (yn) -- ( 1.8 * \x,  0.00 * \y);

  \node[] at                                                      ( 1.78* \x,  0.85 * \y) {$\vdots$};
  \node[] at                                                      ( 1.78* \x, -0.65 * \y) {$\vdots$};

  \draw[dashed] ( 0.35 * \x, -2.1 * \y) -- ( 0.35 * \x, 2.1 * \y) -- ( 2.0 * \x, 2.1 * \y) -- ( 2.0 * \x, -2.1 * \y) -- cycle;
  \draw[dashed] (-0.35 * \x, -2.1 * \y) -- (-0.35 * \x, 2.1 * \y) -- (-2.0 * \x, 2.1 * \y) -- (-2.0 * \x, -2.1 * \y) -- cycle;

  \node[] at (-1.05 * \x, 2.35 * \y) {Island $x$};
  \node[] at ( 1.05 * \x, 2.35 * \y) {Island $y$};

\end{tikzpicture}
  \caption{Two Islands Model}
  \label{fig:island}
  \end{center}
  \end{subfigure}
  \begin{subfigure}[b]{0.345\textwidth} 
  \begin{center}
  \begin{tikzpicture}
  \newcommand*{\size}{0.0cm}
  \newcommand*{\x}{1.85}
  \newcommand*{\ang}{30}

  \pgfmathsetmacro\ax{sin( 0 * \ang) * \x}
  \pgfmathsetmacro\ay{cos( 0 * \ang) * \x}
  \pgfmathsetmacro\bx{sin( 1 * \ang) * \x}
  \pgfmathsetmacro\by{cos( 1 * \ang) * \x}
  \pgfmathsetmacro\cx{sin( 2 * \ang) * \x}
  \pgfmathsetmacro\cy{cos( 2 * \ang) * \x}
  \pgfmathsetmacro\zx{sin(-1 * \ang) * \x}
  \pgfmathsetmacro\zy{cos(-1 * \ang) * \x}

  \node[circle, minimum size=\size, draw, ] at (  0,   0) (0) {$s_0$};
  \node[circle, minimum size=\size, draw, ] at (\ax, \ay) (1) {$s_1$};
  \node[circle, minimum size=\size, draw, ] at (\bx, \by) (2) {$s_2$};
  \node[circle, minimum size=\size, draw, ] at (\cx, \cy) (3) {$s_3$};
  \foreach \i in {3,...,10}
  {
    \pgfmathsetmacro\iang{\i * \ang}
    \pgfmathsetmacro\ix{sin(\i * \ang) * \x}
    \pgfmathsetmacro\iy{cos(\i * \ang) * \x}
    \node at (\ix, \iy) {\rotatebox{-\iang}{$\cdots$}};
  }
  \node[circle, minimum size=\size, draw, ] at (\zx, \zy) (n){$s_n$};

  \draw[black, {Stealth[scale=1.2]}-{Stealth[scale=1.2]}] (0) -- (1) node[pos=0.60,xshift=0.2500cm,yshift= 0.0000cm]{$x_1$};
  \draw[black, {Stealth[scale=1.2]}-{Stealth[scale=1.2]}] (0) -- (2) node[pos=0.60,xshift=0.2165cm,yshift=-0.1250cm]{$x_2$};
  \draw[black, {Stealth[scale=1.2]}-{Stealth[scale=1.2]}] (0) -- (3) node[pos=0.60,xshift=0.1250cm,yshift=-0.2165cm]{$x_3$};
  \draw[black, {Stealth[scale=1.2]}-{Stealth[scale=1.2]}] (0) -- (n) node[pos=0.60,xshift=0.2165cm,yshift= 0.1250cm]{$x_n$}; 
\end{tikzpicture}
  \caption{Discrete Pyramid Model}
  \label{fig:pyramid}
  \end{center}
  \end{subfigure}
  \caption{State space of the models.}
  \end{center}
  \end{figure}
}{
  \begin{figure}[tb]
  \begin{center}
  \begin{subfigure}[b]{\textwidth}
  \begin{center}
  \includegraphics{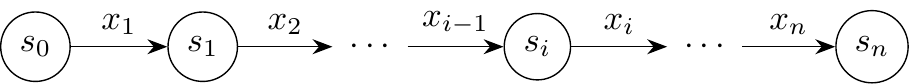}
  \caption{Sequence of Dependencies Model}
  \label{fig:sequence}
  \end{center}
  \end{subfigure}
  \\
  \vspace{0.12cm}
  \begin{subfigure}[b]{0.535\textwidth} 
  \begin{center}
  \includegraphics{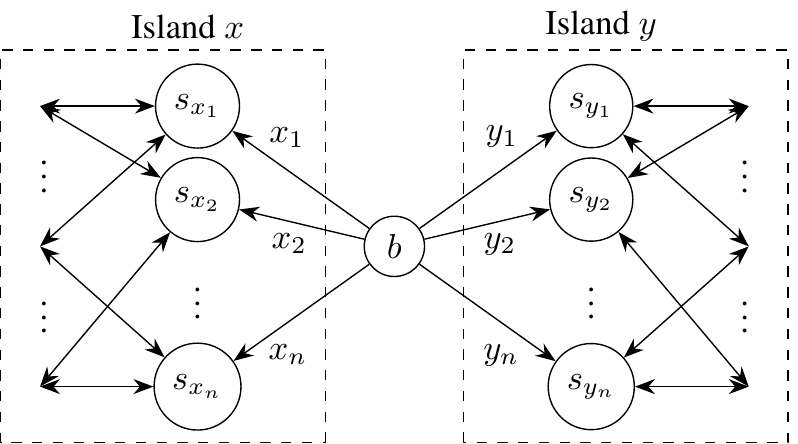}
  \caption{Two Islands Model}
  \label{fig:island}
  \end{center}
  \end{subfigure}
  \begin{subfigure}[b]{0.300\textwidth} 
  \begin{center}
  \includegraphics{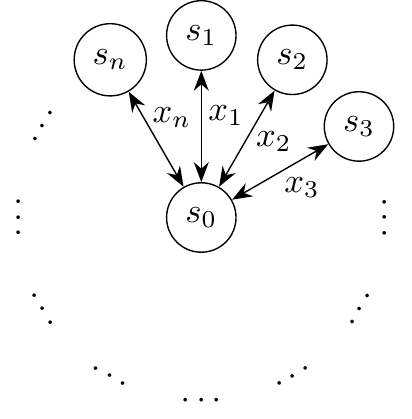}
  \caption{Discrete Pyramid Model}
  \label{fig:pyramid}
  \end{center}
  \end{subfigure}
  \caption{State space of the models.}
  \end{center}
  \end{figure}
}

\subsection{Sequence of Dependencies}
\label{ssSequenceDependencies}

The first model we will describe is the sequence of dependencies
model (Figure \ref{fig:sequence}).
The goal of this model is to explore the question of how fast
systematic scan can be, by constructing a model for which systematic scan
mixes rapidly for one particular good permutation.  The sequence of
dependencies
model achieves this by having the property that, at any time, progress
towards mixing is only made if a particular variable is sampled; this variable
is always the one that is chosen by the good permutation.  As a result,
while a systematic scan using a good permutation makes progress at every step,
both random scan and other systematic scans often fail to make progress, which
leads to a gap between their mixing times.
Thus, this model exhibits two surprising behaviors: (1) one systematic scan
is polynomially better than random scan and (2) systematic scans using different
permutations have polynomial differences in mixing times.  We now describe this
model in detail.

\paragraph{Variables}
There are $n$ binary variables $x_1,\ldots,x_n$.
Independently, each variable has a very strong prior of being true.
However, variable $x_i$ is never true unless $x_{i-1}$ is also true. 
The unnormalized probability distribution is the following, where $M$ is 
a very large constant.
\begin{align*}
P(x) \propto
\begin{cases}
0 & \textrm{if $x_i$ is true and $x_{i-1}$ is false for some $i\in\{2,\ldots,n\}$} \\
M^{|x|} & \textrm{otherwise}
\end{cases}
\end{align*}
\paragraph{State Space}
There are $n+1$ states with non-zero probability: $s_0,\ldots,s_n$, where $s_i$ is the state where the first $i$ variables are true and the remaining $n-i$ variables are false.
In the stationary distribution, $s_n$ has almost all of the mass due to the strong priors on the variables, so reaching $s_n$ is essentially equivalent to mixing.
Notice that sampling $x_i$ will almost always move the state from $s_{i-1}$ to $s_i$, very rarely move it from $s_{i}$ to $s_{i-1}$, and can have no other effect.
The worst-case starting state is $s_0$, where the variables must be sampled in the order $x_1,\ldots,x_n$ for this model to mix.

\paragraph{Random Scan}
The number of steps needed to transition from $s_0$ to $s_1$ is distributed as a geometric random variable with mean $n$ (variables are randomly selected, and specifically $x_1$ must be selected).
Similarly, the number of steps needed to transition from $s_{i-1}$ to $s_i$ is distributed as a geometric random variable with mean $n$.
In total, there are $n$ transitions, so $O(n^2)$ steps are needed to mix.


\paragraph{Best Systematic Scan}
The best systematic scan uses the order $x_1,x_2,\ldots,x_n$.
For this scan, one sweep will reach $s_n$ no matter what the starting state is, so the mixing time is $n$.

\paragraph{Worst Systematic Scan}
The worst systematic scan uses the order $x_n,x_{n-1},\ldots,x_1$.
The first sweep only uses $x_1$, the second sweep only uses $x_2$, and in general, any sweep only makes progress using one transition.
Finally, in the $n$-th sweep, $x_n$ is used in the first step.
Thus, this process mixes in $n(n-1) + 1$ steps, which is $O(n^2)$.

\subsection{Two Islands}
\label{ssTwoIslands}

With the sequence of dependencies model, we showed that a single good
permutation can mix much faster than other scan orders.  Next, we describe 
the two islands model (Figure \ref{fig:island}),
which has the reverse behavior: it has bad
permutations that yield much slower mixing times.  The two islands model
achieves this by having two disjoint blocks of variables such that
consecutively sampling two variables from the same block accomplishes very
little.  As a result, a systematic scan that uses a permutation that frequently consecutively samples
from the same block mixes a polynomial factor slower
than both random scan and most other systematic scans.
We now describe this
model in detail.


\paragraph{Variables}

There are $2n$ binary variables grouped into two blocks: $x_1,\ldots,x_{n}$ and $y_1,\ldots,y_n$.
Conditioned on all other variables being false, each variable is equally likely to be true or false.
However, the $x$ variables and the $y$ variables contradict each other.
As a result, if any of the $x$'s are true, then all of the $y$'s must be false,
and if any of the $y$'s are true, then all of the $x$'s must be false.
The unnormalized probability distribution for this model is the following.
\begin{align}
  \label{eqnDistTwoIslands}
  P(x,y) \propto
  \begin{cases}
  0 & \textrm{if $\exists x_i$ true and $\exists y_j$ true} \\
  1 & \textrm{otherwise}
  \end{cases}
\end{align}

This model can be interpreted as a machine learning inference problem in the
following way.
Each variable represents whether the reasoning in some sentence is sound.
The sentences corresponding to $x_1,\ldots,x_n$ and the sentences
corresponding to $y_1,\ldots,y_n$ reach contradicting conclusions.
If any variable is true, its conclusion is correct, so all of 
the sentences that reached the opposite conclusion must be not be sound, and
their corresponding variables must be false.
However, this does not guarantee that all other sentences that reached the
same conclusion have sound reasoning, so it is possible for some variables
in a block to be true while others are false.  Under these assumptions 
alone, the natural way to model this system is with the two islands
distribution in (\ref{eqnDistTwoIslands}).

\paragraph{State Space}

We can think of the states as being divided into three groups: states in
island $x$ (at least one of the $x$ variables are true), states in island
$y$ (at least one of the $y$ variables are true), and a single
\emph{bridge state} $b$ (all variables are false).
The islands are well-connected internally, but it is impossible to directly
move from one island to the other -- the only way to move from one island to
the other is through the bridge.
To simplify the analysis, we assume that the bridge state has very low mass.
This assumption allows us to assume that islands mix rapidly in comparison to
the time required to
move onto the bridge and that chains always move off of the bridge when a
variable is sampled.
The same asymptotic behavior results when the bridge state has the same mass
as the other states.

The bridge is the only way to move from one island to the other, so it acts as
a bottleneck.
As a result, the efficiency of bridge usage is critical to the mixing time.
We will use \textit{bridge efficiency} to refer to the probability that the
chain moves to the other island when it reaches the bridge.
Under the assumption that mixing within the islands is rapid in comparison to
the time needed to move onto the bridge, the mixing time will be inversely
proportional to the bridge efficiency of the chain.

\paragraph{Random Scan}
In random scan, the variable selected after getting on the bridge is
independent of the previous variable.
As a result, with probability $1/2$, the chain will move onto the other
island, and with probability $1/2$, the chain will return to the same
island, so the bridge efficiency is $1/2$.

\paragraph{Best Systematic Scan}
Several different systematic scans achieve the fastest mixing time.
One such scan is $x_1,y_1,x_2,y_2,\ldots,x_n,y_n$.
Since the sampled variables alternate between the blocks,
if the chain moves onto the bridge (necessarily by sampling a
variable from the island it was previously on), it will always proceed to
sample a variable from the other block, which will cause it to move onto the
other island.  Thus, the bridge efficiency is $1$.
More generally, any systematic scan that alternates between sampling from $x$
variables and sampling from $y$ variables will have a bridge efficiency of $1$.

\paragraph{Worst Systematic Scan}
Several different systematic scans achieve the slowest mixing time.
One such scan is $x_1,\ldots,x_n,\, y_1,\ldots,y_n$.
In this case, if the chain moves onto the bridge, it will almost always proceed
to sample a variable from the same block, and return
to the same island.
In fact, the only way for this chain to move across islands is if it moves
from island $x$ to the bridge using transition $x_n$ and then moves to island
$y$ using transition $y_1$, or if it moves from island $y$ to the
bridge using
transition $y_n$ and then moves to island $x$ using transition $x_1$.
Thus, only $2$ of the $2n$ transitions will cross the bridge, and the
bridge efficiency is $1/n$.
More generally, any systematic scan that consecutively samples all $x$
variables and then all $y$ variables
will have a bridge efficiency of $1/n$.

\paragraph{Comparison of Mixing Times}
The mixing times of the chains are inversely proportional to the bridge
efficiency.
As a result, random scan takes twice as long to mix as the best
systematic scan, and mixes $n/2$ times faster than the worst systematic scan.

\subsection{Discrete Pyramid}

In the discrete pyramid model (Figure \ref{fig:pyramid})
introduced by \citet{roberts2015jsp}, there are $n$ binary variables $x_i$, and
the mass is uniformly distributed over all states where at most one
$x_i$ is true. In this model, the mixing time of random scan, $O(n)$, is
asymptotically better than that of systematic scan for any permutation, which all have the same mixing time, $O(n^3)$.

\section{Experiments}
\label{sec:experiments}

In this section, we run several experiments to illustrate the effect of scan
order on mixing times.  First, in Figure \ref{fig:tmix}, we plot the mixing
times of the models from Section \ref{sec:examples} as a function of the number
of variables. These experiments validate our results about the asymptotic
scaling of the mixing time, as well as show that the scan order
can have a significant effect on the mixing time for even small models.
(Due to the exponential state space of the two islands model, we modify it
slightly to make the computation of mixing times feasible: we simplify the
model by only considering the
states that are adjacent to the bridge, and assume that the states on each
individual island mix instantly.)

In the following experiments, we consider a modified version of the 
two islands model, in which the mass of the bridge state is set to 0.1 of the
mass of the other states to allow the
effect of scan order to be clear even for a small number of
variables.
Figure \ref{fig:marginal} illustrates the rate at which different scan orders
explore this modified model.
Due to symmetry, we know that half of the mass should be on each island in
the stationary distribution, so getting half of the mass onto the other island
is necessary for mixing.
This experiment illustrates that random scan and a good systematic scan move to
the other island quickly, while a bad systematic scan requires many more
iterations.

Figure \ref{fig:sorted} illustrates the effect that the permutation chosen for
systematic scan can have on the mixing time.
In this experiment, the mixing time for each permutation was found and plotted in sorted order.
For the sequence of dependencies model, there are a small number of good permutations
which mix very quickly compared to the other permutations and random scan.
However, no permutation is bad compared to random scan.
In the two islands model, as we would expect based on the 
analysis in Section \ref{sec:examples}, there are a small number of bad
permutations which mix very slowly compared to the other permutations and
random scan.
Some permutations are slightly better than random scan, but none of the scan
orders are substantially better.
In addition, the mixing times for systematic scan approximately discretized
due to the fact that mixing time depends so heavily on the bridge efficiency.

\iftoggle{nips}{
  \begin{figure}[tb]
  \begin{center}
  \begin{subfigure}[b]{0.786\textwidth}
  \begin{center}
  \begin{small}
  \begin{tikzpicture}
  \begin{axis}[
    title = Sequence of Dependencies,
    xlabel = $n$,
    ylabel = $t_{\textrm{mix}}$ (thousands),
    xmin=0,
    xmax=50,
    xtick = {0,25,50},
    ymin = -0.1,
    ymax = 1,
    scale = 0.40
  ]
  \addplot[red,mark= ,y filter/.code={\pgfmathparse{#1*0.001}\pgfmathresult}] table [x=n, y=r, col sep=comma] {sequence.csv};
  \addplot[blue, mark= ,y filter/.code={\pgfmathparse{#1*0.001}\pgfmathresult}] table [x=n, y=b, col sep=comma] {sequence.csv};
  \addplot[cyan, mark= ,y filter/.code={\pgfmathparse{#1*0.001}\pgfmathresult}] table [x=n, y=w, col sep=comma] {sequence.csv};
  \end{axis}
  \end{tikzpicture}
  \begin{tikzpicture}
  \begin{axis}[
    title = Two Islands,
    xlabel = $n$,
    xmin=0,
    xmax=50,
    xtick = {0,25,50},
    ymin = -0.1,
    ymax = 1,
    yticklabels={},
    scale = 0.40
  ]
  \addplot[red,y filter/.code={\pgfmathparse{#1*0.001}\pgfmathresult}] table [x=n, y=r, col sep=comma] {islands.csv};
  \addplot[blue ,y filter/.code={\pgfmathparse{#1*0.001}\pgfmathresult}] table [x=n, y=b, col sep=comma] {islands.csv};
  \addplot[cyan ,y filter/.code={\pgfmathparse{#1*0.001}\pgfmathresult}] table [x=n, y=w, col sep=comma] {islands.csv};
  \end{axis}
  \end{tikzpicture}
  \begin{tikzpicture}
  \begin{axis}[
    title = Discrete Pyramid,
    xlabel = $n$,
    xmin=0,
    xmax=50,
    xtick = {0,25,50},
    ymin = -0.1,
    ymax = 1,
    yticklabels={},
    scale = 0.40
  ]
  \addplot[red,mark= ,y filter/.code={\pgfmathparse{#1*0.001}\pgfmathresult}] table [x=n, y=r, col sep=comma] {pyramid.csv};
  \addplot[cyan, mark= ,y filter/.code={\pgfmathparse{#1*0.001}\pgfmathresult}]  table [x=n, y=s, col sep=comma] {pyramid.csv};
  \addplot[blue,dash pattern=on 6pt off 6pt, mark= ,y filter/.code={\pgfmathparse{#1*0.001}\pgfmathresult}]  table [x=n, y=s, col sep=comma] {pyramid.csv};
  \end{axis}
  \end{tikzpicture}
  \end{small}
  \caption{Mixing times for $\epsilon = 1/4$.}
  \label{fig:tmix}
  \end{center}
  \end{subfigure}
  \begin{subfigure}[b]{0.208\textwidth}
  \begin{scriptsize}
  \begin{tikzpicture} 
  \begin{axis}[
    hide axis,
    xmin=10,
    xmax=50,
    ymin=0,
    ymax=0.4,
    scale = 0.40,
    legend style={draw=white!15!black},
    legend cell align=left,
  ]
      \addlegendimage{blue,mark=}
      \addlegendentry{Best Systematic};
      \addlegendimage{cyan,mark=}
      \addlegendentry{Worst Systematic};
      \addlegendimage{brown!75!black}
      \addlegendentry{Other Systematic};
      \addlegendimage{red}
      \addlegendentry{Random};
      \addlegendimage{black,dashed,thick}
      \addlegendentry{True Value};
  \end{axis}
  \end{tikzpicture}
  \end{scriptsize}
  \vspace{1.55cm}
  \end{subfigure}
  \\ \vspace{0.3cm}
  \begin{subfigure}[b]{0.335\textwidth} 
  \begin{center}
  \begin{small}
  \begin{tikzpicture}
  \begin{axis}[
    title = {Two Islands ($n = 10$)},
    xlabel = Iterations (thousands),
    ylabel = Mass on Island,
    xmin=0,
    xmax=100,
    xtick = {0, 25, 50, 75, 100},
    ymin = 0,
    ymax = 0.6,
    ytick = {0, 0.50},
    ytick = {0, 0.25, 0.50},
    scale = 0.40
  ]
  \addplot[red ,x filter/.code={\pgfmathparse{#1*0.001}\pgfmathresult}] table [x=n, y=r1, col sep=comma] {island_marginal_trunc.csv};
  \addplot[blue,x filter/.code={\pgfmathparse{#1*0.001}\pgfmathresult}] table [x=n, y=b1, col sep=comma] {island_marginal_trunc.csv};
  \addplot[cyan,x filter/.code={\pgfmathparse{#1*0.001}\pgfmathresult}] table [x=n, y=w1, col sep=comma] {island_marginal_trunc.csv};
  \addplot[black,thick,dashed] coordinates {(0, 0.5) (100, 0.5)};
  \end{axis}
  \end{tikzpicture}
  \end{small}
  \caption{Marginal island mass over time.}
  \label{fig:marginal}
  \end{center}
  \end{subfigure}
  \begin{subfigure}[b]{0.655\textwidth} 
  \begin{center}
  \begin{small}
  \begin{tikzpicture}
  \begin{axis}[
    title = {Sequence of Dependencies ($n = 10$)},
    xlabel = Percentile,
    ylabel = $t_{\textrm{mix}}$,
    xmin=-5,
    xmax=105,
    xtick = {0, 25, 50, 75, 100},
    ymin = 0,
    ymax = 150,
    scale = 0.40
  ]
  \addplot[red,x filter/.code={\pgfmathparse{#1/36290}\pgfmathresult}] table [x=n, y=r, col sep=comma] {sequence2.csv};
  \addplot[brown!75!black ,x filter/.code={\pgfmathparse{#1/36290}\pgfmathresult}] table [x=n, y=s, col sep=comma] {sequence2.csv};
  \addplot[blue,mark=*,mark size = 0.8] coordinates {(0,10)};
  \addplot[cyan,mark=*,mark size = 0.8] coordinates {(100,91)};
  \end{axis}
  \end{tikzpicture}
  \begin{tikzpicture}
  \begin{axis}[
    title = {Two Islands ($n = 6$)},
    xlabel = Percentile,
    ylabel = $t_{\textrm{mix}}$ (thousands),
    xmin=-5,
    xmax=105,
    xtick = {0, 25, 50, 75, 100},
    ymin = 0,
    ymax = 3,
    scale = 0.40
  ]
  \addplot[red,x filter/.code={\pgfmathparse{#1/4790016}\pgfmathresult},y filter/.code={\pgfmathparse{#1*0.001}\pgfmathresult}] table [x=n, y=r, col sep=comma] {island_seq.csv};
  \addplot[brown!75!black ,x filter/.code={\pgfmathparse{#1/4790016}\pgfmathresult},y filter/.code={\pgfmathparse{#1*0.001}\pgfmathresult}] table [x=n, y=s, col sep=comma] {island_seq.csv};
  \addplot[blue,mark=*,mark size = 0.8] coordinates {(0,0.527)};
  \addplot[cyan,mark=*,mark size = 0.8] coordinates {(100,2.635)};
  \end{axis}
  \end{tikzpicture}
  \end{small}
  \caption{Sorted mixing times of different permutations ($\epsilon = 1/4$).}
  \label{fig:sorted}
  \end{center}
  \end{subfigure}
  \caption{Empirical analysis of the models.}
  \label{fig:experiments}
  \end{center}
  \end{figure}
}{
  \begin{figure}[tb]
  \begin{center}
  \begin{subfigure}[b]{0.72\textwidth} 
  \begin{center}
  \includegraphics{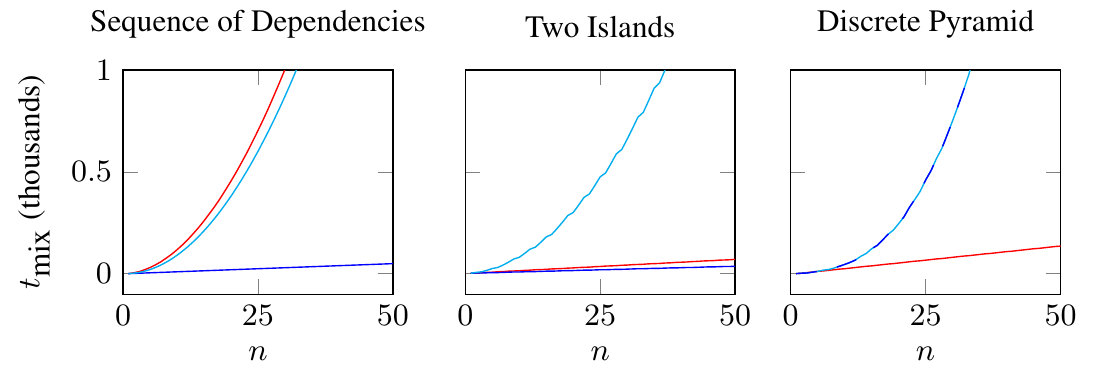}
  \caption{Mixing times for $\epsilon = 1/4$.}
  \label{fig:tmix}
  \end{center}
  \end{subfigure}
  \begin{subfigure}[b]{0.23\textwidth} 
  \includegraphics{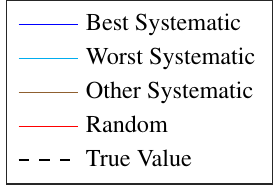}
  \vspace{1.53cm}
  \end{subfigure}
  \\ \vspace{0.3cm}
  \begin{subfigure}[b]{0.33\textwidth} 
  \begin{center}
  \includegraphics{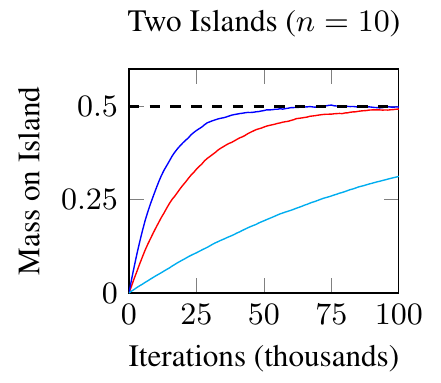}
  \caption{Marginal island mass over time.}
  \label{fig:marginal}
  \end{center}
  \end{subfigure}
  \begin{subfigure}[b]{0.61\textwidth} 
  \begin{center}
  \includegraphics{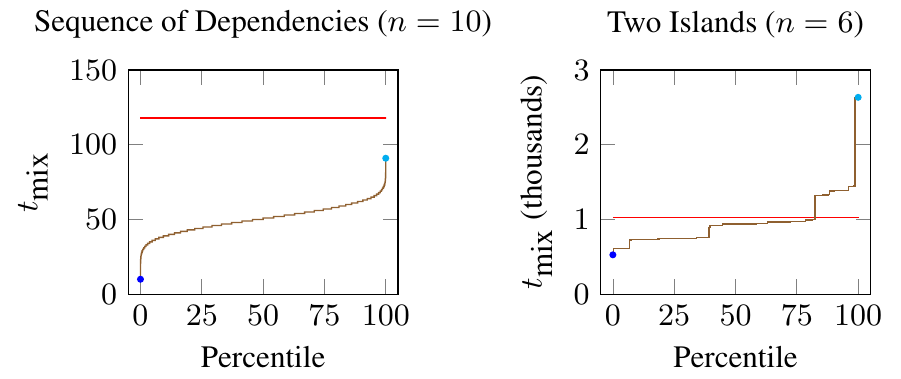}
  \caption{Sorted mixing times of different permutations ($\epsilon = 1/4$).}
  \label{fig:sorted}
  \end{center}
  \end{subfigure}
  \caption{Empirical analysis of the models.}
  \label{fig:experiments}
  \end{center}
  \end{figure}
}

\section{Relative Bounds on Mixing Times via Conductance}
\label{sec:bounds}


In Section \ref{sec:examples}, we described two models for which a systematic scan can mix a polynomial factor faster or slower than random scan, thus invalidating conventional wisdom that the scan order does not have an asymptotically significant effect on mixing times.
This raises a question of how different the mixing times of different scans can be.
In this section, we derive the following weaker -- but correct -- version of
the conjecture stated by \citet{diaconis2013bernoulli} and \citet{levin2009ams}.

One of the obstacles to proving this result is that the systematic scan
chain is not \emph{reversible}.
A standard method of handling non-reversible Markov chains is to study a
lazy version of the Markov chain instead \cite[p.~9]{levin2009ams}.
In the lazy version of a Markov chain, each step has a probability of
$1/2$ of staying at the current state, and acts as a normal step otherwise.
This is equivalent to stopping at a random time that is distributed as
a binomial random variable.
Due to the fact that systematic scan is not reversible, our bounds are on
the lazy systematic scan, rather than the standard systematic scan.

\begin{restatable}{theorem}{thmtmix}
\label{thm:mix}
For any random scan Gibbs sampler $R$ and lazy systematic scan sampler $S$ with
the same stationary distribution $\pi$, their relative mixing times are bounded as follows.
\begin{align*}
\left(1/2-\epsilon\right)^2t_{\textrm{mix}}(R,\epsilon)
&\leq
2t_{\textrm{mix}}^2(S,\epsilon)
\log\left(\frac{1}{\epsilon\pi_{\textrm{min}}}\right)
\\
\left(1/2 - \epsilon\right)^2t_{\textrm{mix}}(S,\epsilon)
&\leq
\frac{8n^2}{\left(\min_{x,i}P_i(x,x)\right)^2}t^2_{\textrm{mix}}(R,\epsilon)
\log\left(\frac{1}{\epsilon\pi_{\textrm{min}}}\right),
\end{align*}
where $P_i$ is the transition matrix corresponding to resampling just
variable $i$,
and $\pi_{\textrm{min}}$ is the probability of the least likely state in
$\pi$.
\end{restatable}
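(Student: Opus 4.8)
The plan is to convert both mixing times into conductances, compare the two conductances, and convert back. For the random scan chain $R=\frac1n\sum_i P_i$ this is routine, since $R$ is reversible; the difficulty, and the reason for introducing an augmented state space, is that the systematic scan is a product $P_{\sigma(1)}\cdots P_{\sigma(n)}$, which is neither reversible nor a single resampling, so its ordinary conductance mixes information across an entire sweep and is intractable to compare directly with that of $R$.

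First I would set up conductance machinery that applies to both chains. For any lazy Markov chain with stationary $\pi$ and conductance $\Phi=\min_{\pi(A)\le 1/2}\frac{Q(A,A^c)}{\pi(A)}$, I would use two $\epsilon$-explicit estimates that hold without reversibility: the bottleneck lower bound $t_{\mathrm{mix}}(\epsilon)\ge\frac{1/2-\epsilon}{\Phi}$ (mass escapes a set $A$ at rate at most $Q(A,A^c)$ per step, so crossing a bottleneck takes time), and a conductance upper bound of the form $t_{\mathrm{mix}}(\epsilon)\le\frac{2}{\Phi^2}\log\frac{1}{\epsilon\pi_{\min}}$ from the evolving-set / Lov\'asz--Simonovits estimates, which remain valid for non-reversible lazy chains. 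Laziness is exactly what lets this upper bound apply to the systematic scan. Rearranged, these read $\Phi\ge\frac{1/2-\epsilon}{t_{\mathrm{mix}}(\epsilon)}$ and $t_{\mathrm{mix}}(\epsilon)\le\frac{2}{\Phi^2}\log\frac{1}{\epsilon\pi_{\min}}$, which already carry the constant $2$ and the logarithmic factor appearing in the theorem.

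The core step is the augmentation and the comparison of conductances. I would lift the systematic scan to the chain on $\Omega\times\{1,\dots,n\}$ that, from $(x,k)$, resamples variable $\sigma(k+1)$ and advances the phase to $k+1$, with stationary measure $\bar\pi=\pi\otimes\mathrm{Unif}$; an augmented cut is a family $\{A_k\}$ and its flow is $\frac1n\sum_k\sum_{x\in A_k,\,y\notin A_{k+1}}\pi(x)P_{\sigma(k+1)}(x,y)$. Taking the constant family $A_k\equiv A$ recovers exactly $Q_R(A)/\pi(A)$, so immediately $\bar\Phi\le\Phi_R$; combined with the bottleneck bound for $S$ and the $\Phi^{-2}$ upper bound for $R$, this yields the first inequality. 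The reverse inequality $\bar\Phi\ge\frac{\min_{x,i}P_i(x,x)}{O(n)}\Phi_R$ is the crux, and is where the holding probabilities enter: any $x\in A_k\setminus A_{k+1}$ contributes its holding mass $\pi(x)P_{\sigma(k+1)}(x,x)\ge\pi(x)\min_{x,i}P_i(x,x)$ to the augmented flow, so a small augmented conductance forces $\sum_k\pi(A_k\triangle A_{k+1})$ to be small, i.e.\ the optimal augmented cut can drift only slightly from phase to phase. Since $\Phi_R\le Q_R(A_k)/\pi(A_k)$ at every single phase $k$, I would average over $k$ and use $|Q_{P_i}(A_k)-Q_{P_i}(A_{k'})|\le\pi(A_k\triangle A_{k'})$ to replace each per-phase flow by the one resampling the augmented chain actually performs there, obtaining $\Phi_R\,\bar\pi(\bar A)\le\bar Q+O\!\left(\frac{n}{\min_{x,i}P_i(x,x)}\bar Q\right)$, which rearranges to the claimed lower bound on $\bar\Phi$. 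Combined with the $\Phi^{-2}$ upper bound for $S$ and the bottleneck bound for $R$, this gives the second inequality, with the $n^2/(\min_{x,i}P_i(x,x))^2$ factor arising from squaring the comparison constant.

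The main obstacle is precisely this reverse comparison: bounding how far the minimizing augmented cut can move between consecutive phases, and matching, variable by variable, the single resampling performed at each phase against the full random-scan flow at that cut. The holding-probability bound is what makes this possible and is the sole place the mild condition $\min_{x,i}P_i(x,x)>0$ is used; obtaining the clean constants in the theorem rather than the crude $O(n)$ my sketch produces would require choosing the reference cut carefully, for instance a median phase, and bounding the symmetric differences without the loose triangle-inequality steps. A secondary technical point is verifying that the lazy augmented chain, whose phase coordinate is randomized only through the holding events, converges to $\bar\pi$ and that its mixing time and conductance faithfully track those of the lazy systematic scan $S$ on $\Omega$, so that the conductance bounds transfer with the stated constants.
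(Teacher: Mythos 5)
Your proposal is correct and follows the same architecture as the paper: lift systematic scan to the augmented space $\Omega\times[n]$ with stationary measure $\pi\otimes\mathrm{Unif}$, sandwich its conductance between (a multiple of) the random-scan conductance and the random-scan conductance itself, and then convert via the non-reversible bottleneck lower bound $t_{\mathrm{mix}}\ge(1/2-\epsilon)/\Phi_\star$ and the lazy-chain upper bound $t_{\mathrm{mix}}\le 2\Phi_\star^{-2}\log(1/(\epsilon\pi_{\min}))$; your cylinder-set argument for $\Phi_{\textrm{SS-A}}\le\Phi_{\textrm{RS}}$ is exactly the paper's. The one place you genuinely diverge is the proof of the lower conductance bound. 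The paper fixes an arbitrary augmented set $S$, decomposes its outflow fiber by fiber into within-fiber terms $Q(S_x,S_x^c)$ and cross-fiber terms $Q(S_x,S_y^c)$, and bounds each systematic-scan term below by $\tfrac{\gamma}{n}$ times the matching random-scan term (supplementing a deficient cross-fiber term with a share of the target fiber's holding flow); it thereby compares against the \emph{augmented} random-scan conductance and gets the clean constant $\gamma/(2n)$. You instead run a global cut-stability argument: the holding probabilities force $\sum_k\pi(A_k\triangle A_{k+1})\le O(n/\gamma)\bar Q$, so all phase slices are close to a reference slice, whose unaugmented random-scan conductance you then recover up to the drift error. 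Both hinge on the same mechanism (phase-varying cuts pay a holding-probability toll), and your route arguably gives a cleaner conceptual picture and avoids introducing the augmented random-scan chain, at the cost of a messier case analysis (choice of reference slice, the regime where the drift is comparable to $\bar\pi(\bar A)$) and looser constants than the paper's $8n^2/\gamma^2$. You are also right to flag, as the paper does not explicitly, that one must check the augmented lazy chain's mixing time tracks that of $S$ on $\Omega$.
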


Under mild conditions, namely $\epsilon$ being fixed and 
the quantities $\log(\pi_{\min}^{-1})$ and
$( \min_{x,i} P_i(x,x) )^{-1}$ being at most polynomial in $n$, this theorem
implies that the choice of scan order can only affect the mixing time by
up to polynomial factors in $n$ and $t_{\mathrm{mix}}$.
We now outline the proof of this theorem.



In the two islands models, the mixing time of a scan order was determined by
its ability to move through a single bridge state that restricted flow.
This suggests that a technique with the ability to model the behavior of this
bridge state is needed to bound the relative mixing
times of different scans.
Conductance, also known as the bottleneck ratio, is a topological property of
Markov chains used to bound mixing times by considering the flow of mass
around the model \cite[p.~88]{levin2009ams}.
This ability to model bottlenecks in a Markov chain makes conductance a
natural technique both for studying the two islands model and bounding mixing
times in general.

More formally, consider a Markov chain on state space $\Omega$ with transition matrix $P$ and stationary distribution $\pi$.
The \textit{conductance} of a set $S$ and of the whole chain are respectively defined as
\begin{align*}
\Phi(S) &= \frac{\sum_{x\in S,y \notin S} \pi(x)P(x,y)}{\pi(S)}
&
\Phi_\star &= \min_{S : \pi(S) \leq \frac{1}{2}}\Phi(S).
\end{align*}
Conductance can be directly applied to analyze random scan.
Let $P_i$ be the transition matrix corresponding to sampling variable $i$.
The state space $\Omega$ is used without modification, and the transition matrix is $P = \frac{1}{n}\sum_{i = 1}^{n}P_i$.
The stationary distribution is the expected target distribution $\pi$.

On the other hand, conductance cannot be directly applied to systematic scan.
Systematic scan is not a true Markov chain because it uses a sequence of
transition matrices rather than a single transition matrix.
One standard method of converting systematic scan into a true Markov chain
is to consider each full scan as one step of a Markov chain.
However, this makes it difficult to compare with random scan because it
completely changes which states are connected by single steps of the
transition matrix.
To allow systematic and random scan to be compared more easily, we
introduce an alternative way of converting systematic scan to a true Markov
chain by augmenting the state space.
%
The augmented state space is $\Psi = \Omega \times [n]$, which represents an ordered pair of the normal state and the index of the variable to be sampled.
The transition probability is $P\left((x,i),(y,j)\right) = P_i(x,y)s(i,j)$, where $s(i,j) = \mathbb{I}[i + 1\equiv j\textrm{ (mod $n$)}]$ is an indicator that shows if the correct variable will be sampled next.

Additionally, augmenting the state space for random scan allows easier comparison with systematic scan in some cases.
For augmented random scan, the state space is also
$\Psi = \Omega \times [n]$, the same as for systematic scan.
The transition probability is
$P\left((x,i),(y,j)\right) = \frac{1}{n} P_i(x,y)$, which means
that the next variable to sample is selected uniformly.
The stationary distributions of the augmented random scan and systematic scan
chains are both $\pi\left((x,i)\right) = n^{-1}\pi(x)$.
Because the state space and stationary distribution are the same, augmented
random scan and augmented systematic scan can be compared directly, which
lets us prove the following lemma.

\begin{restatable}{lemma}{thmcond}
\label{thm:condssrs}
For any random scan Gibbs sampler and systematic scan sampler with
the same stationary distribution $\pi$, let $\Phi_{\textrm{RS}}$ denote the
conductance of the random scan process, let $\Phi_{\textrm{RS-A}}$ denote the
conductance of
the augmented random scan process, and let $\Phi_{\textrm{SS-A}}$ denote the
conductance
of the augmented systematic scan process.  Then,
\begin{align*}
\frac{1}{2n}\cdot\min_{x,i}P_i(x,x)\cdot\Phi_{\textrm{RS-A}}
\leq
\Phi_{\textrm{SS-A}}
\leq 
\Phi_{\textrm{RS}}.
\end{align*}
\end{restatable}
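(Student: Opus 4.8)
The plan is to prove the two inequalities separately, working throughout with the numerator (ergodic flow) and denominator (mass) that define conductance. For a set $T\subseteq\Psi=\Omega\times[n]$, write its slices as $T_i=\{x\in\Omega:(x,i)\in T\}$ and introduce the partial flows
\[
f_i=\sum_{x\in T_i}\sum_{y\notin T_{i+1}}\pi(x)P_i(x,y),
\qquad
g_{i,j}=\sum_{x\in T_i}\sum_{y\notin T_j}\pi(x)P_i(x,y),
\]
with indices taken mod $n$. Since $\pi((x,i))=n^{-1}\pi(x)$ and $s(i,j)=\mathbb{I}[j\equiv i+1]$, the flow out of $T$ is $n^{-1}\sum_i f_i$ for augmented systematic scan and $n^{-2}\sum_{i,j}g_{i,j}$ for augmented random scan, while both chains share the denominator $\pi(T)$. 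Two facts I would use repeatedly are that each single-site kernel $P_i$ is reversible with respect to $\pi$ (hence $\sum_x\pi(x)P_i(x,y)=\pi(y)$) and that $P_i(x,x)\ge m:=\min_{x,i}P_i(x,x)$.

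For the upper bound $\Phi_{\textrm{SS-A}}\le\Phi_{\textrm{RS}}$, I would take a set $S^\star\subseteq\Omega$ achieving $\Phi_{\textrm{RS}}$ with $\pi(S^\star)\le\tfrac12$ and lift it to $T=S^\star\times[n]$, so that every slice equals $S^\star$. Because $\sum_j s(i,j)=1$ and $\sum_i P_i=nP$ for the random-scan kernel $P$, the systematic-scan flow out of $T$ collapses exactly to $\sum_{x\in S^\star,\,y\notin S^\star}\pi(x)P(x,y)$, while $\pi(T)=\pi(S^\star)$. Hence $\Phi_{\textrm{SS-A}}(T)=\Phi_{\textrm{RS}}$, and minimality of $\Phi_{\textrm{SS-A}}$ gives the claim. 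This direction is a short direct computation.

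The lower bound is the crux. I take $T$ to minimize $\Phi_{\textrm{SS-A}}$ (so $\pi(T)\le\tfrac12$) and reduce the goal to the flow comparison $\sum_i f_i\ge\tfrac{m}{2n^2}\sum_{i,j}g_{i,j}$; dividing by $\pi(T)$ and using $n^{-2}\sum_{i,j}g_{i,j}\ge\pi(T)\,\Phi_{\textrm{RS-A}}$ then yields the stated bound. I would establish the comparison through three ingredients: (i) a \emph{self-loop bound} $f_i\ge m\,\pi(T_i\setminus T_{i+1})$, obtained because for $x\in T_i\setminus T_{i+1}$ the diagonal term $y=x$ already lies outside $T_{i+1}$ and contributes $\pi(x)P_i(x,x)\ge m\,\pi(x)$; (ii) the split $T_j^c=T_{i+1}^c\sqcup(T_{i+1}\cap T_j^c)$ combined with reversibility, which gives $g_{i,j}\le f_i+\pi(T_{i+1}\cap T_j^c)$ since $\sum_{x}\pi(x)P_i(x,y)=\pi(y)$ turns an incoming-flow term into a mass term; and (iii) a union bound around the index cycle, where any configuration in $T_{i+1}\cap T_j^c$ must drop out at some consecutive slice along the arc, so $\pi(T_{i+1}\cap T_j^c)\le\sum_k\pi(T_k\setminus T_{k+1})\le m^{-1}\sum_k f_k$. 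Summing $g_{i,j}\le f_i+m^{-1}\sum_k f_k$ over all $n^2$ pairs $(i,j)$ gives $\sum_{i,j}g_{i,j}\le(n+n^2/m)\sum_i f_i\le(2n^2/m)\sum_i f_i$, which is exactly the comparison sought.

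The main obstacle is precisely the off-diagonal flows $g_{i,j}$ with $j\neq i+1$: these contribute to augmented random scan but have no direct systematic-scan counterpart, since a single systematic step only connects index $i$ to $i+1$. Steps (i)--(iii) resolve this by routing such flow through consecutive slices—reversibility converts the bound on flow \emph{into} $T_{i+1}\cap T_j^c$ into a bound on its mass, and the self-loop probability $m$ lets systematic scan carry a configuration across one index at unit cost, so the disappearance of a configuration between slices $i+1$ and $j$ is charged to the $f_k$ terms along the arc. This routing is where the factor $\tfrac{1}{2n}\min_{x,i}P_i(x,x)$ originates, and getting the cycle bookkeeping right is the delicate part of the argument.
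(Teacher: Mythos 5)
Your proposal is correct, and while your upper bound is essentially the paper's argument (lift a minimizing $S\subseteq\Omega$ to $S\times[n]$ and check that the conductance is preserved exactly), your lower bound takes a genuinely different route. The paper fixes a set $S\subseteq\Psi$ and partitions its boundary flow by the \emph{base state}: it groups the augmented states into fibers $S_x=\{(x,i)\in S\}$, shows $Q_{\textrm{RS}}(S_x,S_x^c)\leq\tfrac14\pi(x)$ versus $Q_{\textrm{SS}}(S_x,S_x^c)\geq\tfrac{\gamma}{n}\pi(x)$ whenever a fiber is split, and controls each cross-flow $Q(S_x,S_y^c)$ by pairing it with a $\tfrac{1}{n}P_i(y,x)$-fraction of the diagonal flow out of the $y$-fiber; summing over $x,y$ gives the $\tfrac{\gamma}{2n}$ factor. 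You instead slice by the \emph{index coordinate} $T_i$, use stationarity of each $P_i$ to convert the flow into $T_{i+1}\cap T_j^c$ into the mass $\pi(T_{i+1}\cap T_j^c)$, and then charge that mass, via a union bound around the index cycle and the self-loop bound $f_k\geq m\,\pi(T_k\setminus T_{k+1})$, to the systematic-scan flows $f_k$. Both arguments land on the same constant $\tfrac{1}{2n}\min_{x,i}P_i(x,x)$; yours makes the ``routing through consecutive indices'' mechanism explicit and global, while the paper's is a local term-by-term comparison that never needs the cycle union bound. One cosmetic point: $T_j^c$ is not literally the disjoint union $T_{i+1}^c\sqcup(T_{i+1}\cap T_j^c)$ but is contained in it; since all flow terms are nonnegative this only strengthens the inequality $g_{i,j}\leq f_i+\pi(T_{i+1}\cap T_j^c)$, so nothing breaks, but you should state it as a cover rather than a partition.
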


In Lemma \ref{thm:condssrs}, the upper bound states that the conductance of
systematic scan is no larger than the conductance of random scan.
We use this in the next section to show that systematic scan cannot mix too
much more quickly than random scan.
To prove this lemma, we show that for any set $S$ under random scan, the
set $\hat S$ containing the corresponding augmented states for systematic scan
will have the same conductance under systematic scan as $S$ had under
random scan.

The lower bound in Lemma \ref{thm:condssrs} states that the conductance of
systematic scan is no smaller than a function of the conductance of
augmented random scan.
This function depends on the number of variables $n$ and
$\min_{x,i} P_i(x,x)$, which is the minimum holding probability of any state.
To prove this lemma, we show that for any set $S$ under augmented systematic
scan, we can bound its conductance under
augmented random scan.

There are well-known bounds on the mixing time of a Markov chain in terms
of its conductance, which we state
in Theorem \ref{thm:bounds} \cite[pp.~89,~235]{levin2009ams}.
\begin{restatable}{theorem}{levinthm}
\label{thm:bounds}
For any lazy or reversible Markov chain,
\begin{align*}
\frac{1/2 - \epsilon}{\Phi_\star}
\leq
t_{\textrm{mix}}(\epsilon)
\leq \frac{2}{\Phi_\star^2}
\log\left(\frac{1}{\epsilon\pi_{\textrm{min}}}\right).
\end{align*}
\end{restatable}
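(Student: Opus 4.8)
The plan is to prove the two inequalities independently, since each encodes one of the two classical relationships between conductance and mixing: the lower bound $t_{\textrm{mix}}(\epsilon) \geq (1/2-\epsilon)/\Phi_\star$ says that a bottleneck obstructs mixing, and the upper bound $t_{\textrm{mix}}(\epsilon) \leq 2\Phi_\star^{-2}\log(1/(\epsilon\pi_{\textrm{min}}))$ says that the absence of a bottleneck guarantees fast mixing, which I would obtain by routing through the spectral gap via Cheeger's inequality.

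For the lower bound I would use an escape-time (bottleneck) argument. Let $S$ attain the minimum in the definition of $\Phi_\star$, so that $\pi(S)\leq 1/2$ and $\Phi(S)=\Phi_\star$, and start the chain from $\mu$ equal to $\pi$ conditioned on $S$, i.e. $\mu(x)=\pi(x)/\pi(S)$ for $x\in S$ and $\mu(x)=0$ otherwise. A one-step computation gives that the mass crossing out of $S$ equals $\sum_{x\in S,\,y\notin S}\mu(x)P(x,y) = \Phi(S)=\Phi_\star$. To iterate this over $t$ steps, I would use the maximum principle $\max_x (P^{(t)}\mu)(x)/\pi(x) \leq \max_x \mu(x)/\pi(x) = 1/\pi(S)$, which follows from reversibility; this keeps the per-step escape probability bounded by $\Phi_\star$, so the total mass that has reached $S^c$ after $t$ steps is at most $t\Phi_\star$. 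Since $\pi(S^c)\geq 1/2$, taking $A=S^c$ in the definition of total variation distance gives $\|P^{(t)}\mu-\pi\|_{\textrm{TV}} \geq \pi(S^c)-(P^{(t)}\mu)(S^c) \geq 1/2 - t\Phi_\star$. Forcing this below $\epsilon$ at $t=t_{\textrm{mix}}(\epsilon)$ yields $t_{\textrm{mix}}(\epsilon)\geq (1/2-\epsilon)/\Phi_\star$.

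For the upper bound I would pass through the spectral gap $\gamma = 1-\lambda_2$. The first ingredient is the difficult direction of Cheeger's inequality, $\gamma \geq \Phi_\star^2/2$. The second is the standard relaxation-time bound $t_{\textrm{mix}}(\epsilon) \leq \gamma^{-1}\log(1/(\epsilon\pi_{\textrm{min}}))$, proved by expanding $P^{(t)}\mu-\pi$ in the orthonormal (in $\ell^2(\pi)$) eigenbasis of the self-adjoint operator $P$, bounding each mode's contraction by $\lambda_2^t \leq e^{-\gamma t}$, and converting the resulting $\ell^2(\pi)$ estimate into a total variation bound, which is where the factor $\pi_{\textrm{min}}^{-1}$ enters. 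Substituting $\gamma \geq \Phi_\star^2/2$ into this bound produces $t_{\textrm{mix}}(\epsilon) \leq 2\Phi_\star^{-2}\log(1/(\epsilon\pi_{\textrm{min}}))$.

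The main obstacle is the hard direction of Cheeger, $\gamma \geq \Phi_\star^2/2$ (the easy direction is not needed here). The standard route takes the eigenfunction $f$ of $\lambda_2$, thresholds it into super-level sets, and lower bounds the Dirichlet form $\sum_{x,y}\pi(x)P(x,y)(f(x)-f(y))^2$ by a multiple of $\Phi_\star^2$ times $\mathrm{Var}_\pi(f)$ through a co-area identity and a Cauchy--Schwarz step. The hypotheses enter exactly at the spectral step: reversibility makes $P$ self-adjoint in $\ell^2(\pi)$ so that the eigenexpansion and Cheeger apply, while laziness forces every eigenvalue to be nonnegative so that $\gamma=1-\lambda_2$ really governs the $\ell^2$ decay. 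Because both ingredients are classical, I would cite the corresponding statements of \citet{levin2009ams} rather than reproduce the co-area estimate in full.
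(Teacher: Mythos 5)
The paper does not prove this theorem from scratch: it simply cites Theorem~7.3 of \citet{levin2009ams} for the lower bound (noting that it in fact holds for \emph{any} Markov chain, with no laziness or reversibility hypothesis) and Theorem~17.10 of \citet{levin2009ams} for the upper bound. Your lower-bound sketch is the standard bottleneck argument and is essentially correct, but note that the maximum principle $\max_x (P^{(t)}\mu)(x)/\pi(x)\leq\max_x\mu(x)/\pi(x)$ you invoke needs only stationarity ($\mu(x)\leq c\,\pi(x)$ for all $x$ implies $\mu P(y)\leq c\sum_x\pi(x)P(x,y)=c\,\pi(y)$), not reversibility; attributing it to reversibility needlessly weakens the statement, and the paper explicitly relies on the lower bound holding for arbitrary chains.

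The genuine gap is in your upper bound. Routing through Cheeger's inequality and the eigenexpansion requires $P$ to be self-adjoint on $\ell^2(\pi)$, i.e.\ reversible, so your argument establishes the bound only for chains that are \emph{both} lazy and reversible. But the chain to which this theorem must be applied in the paper is the augmented lazy systematic scan chain, which is not reversible --- indeed, the paper introduces laziness precisely because systematic scan is not reversible, and the transition structure $P\bigl((x,i),(y,j)\bigr)=P_i(x,y)s(i,j)$ with $s(i,j)=\mathbb{I}[i+1\equiv j]$ is manifestly directional on the index coordinate. (Reversibility alone, without laziness, cannot rescue the upper bound either: a periodic two-state flip chain has $\Phi_\star=1$ but never mixes.) The result the paper actually needs is the evolving-sets bound of Morris and Peres, which is what Theorem~17.10 of \citet{levin2009ams} proves; it assumes laziness but \emph{not} reversibility. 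To repair your proof you would either have to reproduce the evolving-sets argument, or pass to the additive reversibilization $(P+P^{*})/2$ and justify that its conductance equals that of $P$ and that its spectral gap controls the mixing of the non-reversible chain --- neither of which is present in your sketch.
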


It is straightforward to prove the result of Theorem \ref{thm:mix} by combining these bounds with the conductance bounds from the previous section.

\section{Conclusion}
\label{sec:conclusion}

We studied the effect of scan order on mixing times of Gibbs samplers, and found that for particular models, the scan order can have an asymptotic effect on the mixing times.
These models invalidate conventional wisdom about scan order and show that we cannot freely change scan orders without considering the resulting changes in mixing times.
In addition, we found bounds on the mixing times of different scan orders, which replaces a common conjecture about the mixing times of random scan and systematic scan.

\todo{
Open problems:
\begin{itemize}
  \item What are conditions such that scan order doesn't matter (say, no more than a log factor) -- this is distinct from asking what is sufficient for rapid mixing, since we can just have that all scan orders mix terribly.
  \item Does random reshuffling satisfy the same conjecture?
  \item Are there clean bounds on spectral radii?
  \item Can the marginals have substantially different mixing times?
\end{itemize}
}

\subsection*{Acknowledgments}

The authors acknowledge the support of:
DARPA FA8750-12-2-0335;
NSF IIS-1247701;
NSF CCF-1111943;
DOE 108845;
NSF CCF-1337375;
DARPA FA8750-13-2-0039;
NSF IIS-1353606;
ONR N000141210041 and N000141310129;
NIH U54EB020405;
NSF DGE-114747;
DARPA's SIMPLEX program;
Oracle;
NVIDIA;
Huawei;
SAP Labs;
Sloan Research Fellowship;
Moore Foundation;
American Family Insurance;
Google;
and Toshiba.
The views and conclusions expressed in this material are those of the authors and should not be interpreted as necessarily representing the official policies or endorsements, either expressed or implied, of DARPA, AFRL, NSF, ONR, NIH, or the U.S. Government.

\clearpage
\bibliographystyle{plainnat}
\begin{small}
\bibliography{bibliography}
\end{small}

\iftoggle{appendix}{
\clearpage
\appendix

\section{Additional Models in Which Scan Order Matters}
\label{app:examples}

\renewcommand{\arraystretch}{1.2}
\begin{table}[htb]
\begin{center}
\caption{Models Classified by Relative Mixing Times}
\label{table:examples}
\begin{tabular}{cc|c|c|c|}
\multicolumn{2}{c}{} & \multicolumn{3}{c}{$\min_{\alpha}t_{\textrm{mix}}(S_\alpha)$ versus $t_{\textrm{mix}}(R)$} \\
\multicolumn{2}{c}{} & \multicolumn{1}{c}{$\ll$} & \multicolumn{1}{c}{$\approx$} & \multicolumn{1}{c}{$\gg$} \\
\cline{3-5}
$\max_{\alpha}t_{\textrm{mix}}(S_\alpha)$ & $\ll$ & Memorize and Repeat      & $\times$        & $\times$         \\
\cline{3-5}
versus                 & $\approx$ & Sequence of Dependencies & Known Cases & $\times$         \\
\cline{3-5}
$t_{\textrm{mix}}(R)$                       & $\gg$ & Soft Dependencies        & Two Islands     & Discrete Pyramid \cite{roberts2015jsp} \\
\cline{3-5}
\end{tabular}
\end{center}
\end{table}

Table \ref{table:examples} classifies the models based on the mixing times of the best systematic scan and worst systematic scan relative to random scan.
For models in the first column, the best systematic scan mixes asymptotically faster than random scan.
For models in the second column, they differ only up to logarithmic factors.
For models in the third column, the best systematic scan mixes asymptotically slower than random scan.
The rows are classified in the same way based on the worst systematic scan instead of the best systematic scan.

The models in Section \ref{sec:examples} display possible problems that can occur when changing scan orders.
However, two behaviors were not shown by the models in Section \ref{sec:examples}: a case where random scan is asymptotically worse than all systematic scans, and a case where random scan is asymptotically better than a systematic scan and asymptotically worse than another systematic scan.
This section gives two additional models that exhibit these behaviors, along with a more detailed explanation of the discrete pyramid model.
These two models are more complicated than the models in Section \ref{sec:examples} due to the fact that they require the number of states for each variable to grow with $n$.
Combined with the models in Section \ref{sec:examples}, these models show that all logically consistent asymptotic behaviors (that is, behaviors where the worst systematic scan is no better than the best systematic scan) are possible.

\subsection{Discrete Pyramid}

The discrete pyramid model (Figure \ref{fig:pyramid}) was first described by \citet{roberts2015jsp} and is included for completeness to show that it is possible for random scan to mix asymptotically faster than all systematic scans.


\paragraph{Variables}
There are $n$ binary variables $x_1,\ldots,x_n$.
Conditioned on all other variables being false, each variable is equally likely to be true or false.
However, the variables are all contradictory, and at most one variable can be true.
This model can also be interpreted as an $n$ islands model, where there are $n$ well-connected regions (which consist of a single state) connected by a single bridge state.

\paragraph{State Space}
There are $n+1$ states $s_0,s_1,\ldots,s_n$ with nonzero probability.
$s_0$ is the state where all variables are false, and for all other $i$, $s_i$ is the state where variable $x_i$ is true and all other variables are false.

\paragraph{Random Scan}
The worst-case total variation occurs when the starting state is not $x_0$.
Suppose that the starting state is $s_k$.
$x_k$ will be selected with probability $1/n$, and in this case, the state will change to $x_0$ with probability $1/2$.
Thus, the number of samples needed to leave $x_k$ is distributed as a Geometric random variable with a mean of $2n$.
This means that within $O(n)$ steps, the chain will have left the initial state with high probability.
Once the chain has reached $x_0$, each step has a probability of $1/2$ of leaving $x_0$ and uniformly going to another state.
Thus, $O(1)$ steps are sufficient for the chain to mix after it reaches $x_0$.
In total, the chain mixes in $O(n)$ updates.

\paragraph{All Systematic Scans}
Once again, the worst-case total variation occurs when the starting state is not $x_0$.
Suppose that the starting state is $s_k$.
A systematic scan step will change nothing until $x_k$ is reached.
Then, with probability $1/2$, the state will remain $s_k$, and with probability $1/2$ the state will change to $x_0$.
If it does move to $x_0$, the scan will continue, and move away from $x_0$ with probability $1/2$ at each step. 
Thus, each time the scan reaches the current state, the state will advance $Z$ steps, where $Z$ is a Geometric random variable with rate $1/2$.
This corresponds to a weighted random walk on a circle, which is known to have a mixing time of $O(n^2)$.
\todo{(TODO: need to cite paper on how random walk on circle is $O(n^2)$)}
Notice that each step of the random walk actually requires one full sweep, so $O(n^3)$ steps are needed for any systematic scan to mix. 

\begin{figure}[tb]
\begin{center}
\iftoggle{nips}{
  \begin{tikzpicture}
  \newcommand*{\size}{0.3}
  \newcommand*{\x}{1.2}
  \newcommand*{\y}{1.1}

  \node[circle, minimum size=\size, draw] at (        0,  0) (0)  {};

  \node[circle, minimum size=\size, draw] at ( 1 * \x,  1 * \y) (1) {};
  \node[circle, minimum size=\size      ] at ( 1 * \x,  0 * \y) (i) {$\vdots$};
  \node[circle, minimum size=\size, draw] at ( 1 * \x, -1 * \y) (n) {};

  \node[circle, minimum size=\size, draw] at ( 2 * \x,  1.3 * \y) (12) {};
  \node[circle, minimum size=\size      ] at ( 2 * \x,  1.0 * \y) (1i) {$\vdots$};
  \node[circle, minimum size=\size, draw] at ( 2 * \x,  0.5 * \y) (1n) {};
  \node[circle, minimum size=\size      ] at ( 2 * \x,  0   * \y) (ii) {$\vdots$};
  \node[circle, minimum size=\size, draw] at ( 2 * \x, -0.7 * \y) (n1) {};
  \node[circle, minimum size=\size      ] at ( 2 * \x, -1.0 * \y) (ni) {$\vdots$};
  \node[circle, minimum size=\size, draw] at ( 2 * \x, -1.5 * \y) (nn) {};

  \node[circle, minimum size=\size      ] at ( 3 * \x,  1.3 * \y) (12i) {$\cdots$};
  \node[circle, minimum size=\size      ] at ( 3 * \x,  0.5 * \y) (1ni) {$\cdots$};
  \node[circle, minimum size=\size      ] at ( 3 * \x, -0.7 * \y) (n1i) {$\cdots$};
  \node[circle, minimum size=\size      ] at ( 3 * \x, -1.5 * \y) (nni) {$\cdots$};

  \node[circle, minimum size=\size, draw] at ( 4 * \x,  1.3 * \y) (12ii) {};
  \node[circle, minimum size=\size, draw] at ( 4 * \x,  0.5 * \y) (1nii) {};
  \node[circle, minimum size=\size, draw] at ( 4 * \x, -0.7 * \y) (n1ii) {};
  \node[circle, minimum size=\size, draw] at ( 4 * \x, -1.5 * \y) (nnii) {};

  \node[circle, minimum size=\size      ] at ( 5 * \x,  1.3 * \y) (12iii) {$\cdots$};
  \node[circle, minimum size=\size      ] at ( 5 * \x,  0.5 * \y) (1niii) {$\cdots$};
  \node[circle, minimum size=\size      ] at ( 5 * \x, -0.7 * \y) (n1iii) {$\cdots$};
  \node[circle, minimum size=\size      ] at ( 5 * \x, -1.5 * \y) (nniii) {$\cdots$};

  \node[circle, minimum size=\size, draw] at ( 6 * \x,  0 * \y) (x) {};

  \draw[black, -{Stealth[scale=1.2]}] (0) -- (1) node[midway,above]{$T_1$};
  \draw[black, -{Stealth[scale=1.2]}] (0) -- (n) node[midway,below]{$T_n$};

  \draw[black, -{Stealth[scale=1.2]}] (1) -- (12) node[midway,above]{$T_2$};
  \draw[black, -{Stealth[scale=1.2]}] (1) -- (1n) node[midway,below]{$T_n$};
  \draw[black, -{Stealth[scale=1.2]}] (n) -- (n1) node[midway,above]{$T_1$};
  \draw[black, -{Stealth[scale=1.2]}] (n) -- (nn) node[midway,below]{$T_{n-1}$};

  \draw[black, -{Stealth[scale=1.2]}] (12) -- (2.7 * \x,  1.1 * \y);
  \draw[black, -{Stealth[scale=1.2]}] (12) -- (2.7 * \x,  1.5 * \y);
  \draw[black, -{Stealth[scale=1.2]}] (1n) -- (2.7 * \x,  0.3 * \y);
  \draw[black, -{Stealth[scale=1.2]}] (1n) -- (2.7 * \x,  0.7 * \y);
  \draw[black, -{Stealth[scale=1.2]}] (n1) -- (2.7 * \x, -0.9 * \y);
  \draw[black, -{Stealth[scale=1.2]}] (n1) -- (2.7 * \x, -0.5 * \y);
  \draw[black, -{Stealth[scale=1.2]}] (nn) -- (2.7 * \x, -1.7 * \y);
  \draw[black, -{Stealth[scale=1.2]}] (nn) -- (2.7 * \x, -1.3 * \y);

  \draw[black, -{Stealth[scale=1.2]}] (12i) -- (12ii);
  \draw[black, -{Stealth[scale=1.2]}] (1ni) -- (1nii);
  \draw[black, -{Stealth[scale=1.2]}] (n1i) -- (n1ii);
  \draw[black, -{Stealth[scale=1.2]}] (nni) -- (nnii);

  \draw[black, -{Stealth[scale=1.2]}] (12ii) -- (12iii);
  \draw[black, -{Stealth[scale=1.2]}] (1nii) -- (1niii);
  \draw[black, -{Stealth[scale=1.2]}] (n1ii) -- (n1iii);
  \draw[black, -{Stealth[scale=1.2]}] (nnii) -- (nniii);

  \draw[black, -{Stealth[scale=1.2]}] (12iii) -- (x);
  \draw[black, -{Stealth[scale=1.2]}] (1niii) -- (x);
  \draw[black, -{Stealth[scale=1.2]}] (n1iii) -- (x);
  \draw[black, -{Stealth[scale=1.2]}] (nniii) -- (x);

\draw [decorate,decoration={brace,amplitude=10pt}] (4 * \x,-1.8 * \y) -- (     0,-1.8 * \y) node[black,midway,yshift=-0.55cm] {Memorize};
\draw [decorate,decoration={brace,amplitude=10pt}] (6 * \x,-1.8 * \y) -- (4 * \x,-1.8 * \y) node[black,midway,yshift=-0.55cm] {Repeat};





\end{tikzpicture}
}{
  \includegraphics{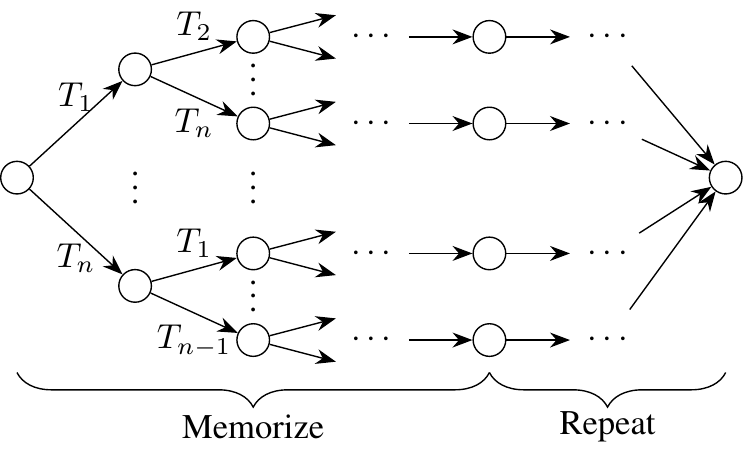}
}
\caption{Memorize and Repeat Model}
\label{fig:memorize}
\end{center}
\end{figure}

\subsection{Memorize and Repeat}
We introduce the memorize and repeat model to show that it is possible for all systematic scans to be asymptotically better than random scan.

\paragraph{Description}
In this example (Figure \ref{fig:memorize}), there are two types of states: states memorizing a permutation, and states requiring the memorized permutation to be repeated.
In the stationary distribution, almost all of the mass is on the state where a permutation has been memorized and repeated, and the probability of the states increase exponentially from left to right.
There are $n$ variables $x_1,\ldots,x_n$.
In the memorize phase, sampling a variable will allow the state to change as long as the variable has not been sampled before.
In the repeat phase, a sampling a variable will only allow the state to change if it is the next variable in the memorized permutation.
The repeat phase is similar to the sequence of dependencies example, with the variables rearranged to match the memorized permutation.

In the worst-case starting distribution, the wrong permutation can already be memorized, which results in no asymptotic gap between mixing times.
To allow the difference in mixing time to appear even in the worst-case analysis, we repeat the memorize and repeat process $n$ times.

\paragraph{Random Scan}
Random scan encounters the coupon collector problem to get through the memorize phase, so $O(n\log n)$ steps are needed.
Afterwards, the repeat phase is equivalent to a sequence of dependencies, which requires $O(n^2)$ steps.
Thus, to get through one memorize and repeat chain, $O(n^2)$ steps are needed.
This mechanism is repeated $n$ times, so $O(n^3)$ steps are needed.

\paragraph{All Systematic Scans}
If a systematic scan starts at the beginning of a memorize sequence, $2n$ steps are sufficient to move through the memorize and repeat chain ($n$ steps to memorize, $n$ steps to repeat).
However, when analyzing the worst-case total variation, we must consider the case where the wrong permutation has already been memorized, which requires $O(n^2)$ steps to move through.
However, this can only happen once, so in the worst case $O(n^2)$ steps are needed to move through the one incorrect permutation, and $O(n)$ steps are needed for the remaining $n-1$ memorize and repeat chains.
Thus, $O(n^2)$ steps are needed in total.

\paragraph{State Space}

To simplify the explanation of the state space and formulation as a Gibbs sampler, this explanation is for only one memorize and repeat cycle.
In the memorize phase, the states are labeled by the part of the permutation that has already been memorized.
In the repeat phase, the states are labeled by the remainder of the permutation that still has to be repeated.

\paragraph{Formulation as a Gibbs Sampler}
There are $n$ integer-valued variables $x_1,\ldots,x_n$ with values ranging from $0$ to $n+1$.
In the initial state that does not have anything memorized, all variables have a value of 0.
In the memorize phase, any variable that has not yet been memorized has a value of 0, and any variable that has been memorized stores its index within the permutation. 
This means that for states in the memorize phase with $i$ variables memorized, exactly one variable will have a value of $j$, for each integer $j$ from 1 to $i$, and the remaining $n-i$ variables that have not been memorized have a value of $0$.
As more variables are memorized, the states become exponentially more likely.
Next, in the repeat phase, when a variable is used, its value changes to $n+1$.
This means that for states in the repeat phase with $i$ variables repeated, the $i$ variables that have been repeated have a value of $n+1$, and the remaining $n-i$ variables that still have to be repeated have a unique integer from $i+1$ to $n$.
As more variables are repeated, the states become exponentially more likely.
The log probability distribution is the following, where $M$ is a very large constant, $Z$ is the normalizing constant, and $\sigma(a_1,\ldots,a_n)$ denotes the set of all permutations of $(a_1,\cdots,a_n)$.
\begin{align*}
\log_M P(x)
&= -\log_M Z + 
\begin{cases}
\max(x_1,\ldots,x_n) & \textrm{if $(x_1,\ldots,x_n)\in\sigma(0,\ldots,0,\ 1,\ldots, i)$} \\
n + \sum_{i=1}^{n}\mathbb{I}[x_i = n + 1] & \textrm{if $(x_1,\ldots,x_n)\in\sigma(i+1,\ldots,n,\ n+1,\ldots, n+1)$} \\
-\infty & \textrm{otherwise}
\end{cases} \\
&= -\log_M Z + 
\begin{cases}
\left(\textrm{Number of Memorized Variables}\right) & \textrm{if valid memorize state} \\
n + \left(\textrm{Number of Repeated Variables}\right) & \textrm{if valid repeat state} \\
-\infty & \textrm{otherwise}
\end{cases}
\end{align*}

First, for states in the memorize phase, if a variable that has not yet been memorized is sampled, then its value will almost always change to $i+1$, where $i$ is the number of variables already memorized, due to the large value of $M$.
However, sampling a variable that has already been memorized will not change its value because all other values will either result in an invalid state (and have a probability of 0) or be significantly less likely due to the large value of $M$.
Next, for states in the repeat phase, if the correct variable is sampled, then its value will almost always change to $n+1$.
However, sampling the wrong variable will not change its value because all other values result in an invalid state or decrease the probability of the state.

\subsection{Soft Dependencies}

In the soft dependencies model, some systematic scans mix asymptotically faster than random scan, and some systematic scans mix asymptotically slower.

\paragraph{Description}
This example resembles the memorize portion of the previous example.
However, no repeat phase is included, and only some permutations are accepted.
In particular, a permutation is accepted only if each element of the permutation is followed by an element that is in the next $\sqrt{n}$ unused variables that come after it (mod $n$).

More formally, consider a permutation $(a_1,\ldots,a_n)$ of $(1,\ldots,n)$.
The permutation is accepted only if the following holds for all $i\in\{1,\ldots,n-1\}$.
\begin{align*}
\sqrt{n}
&\geq
\left\vert\left\{\ j \in \{1,\ldots,n\} \mid j > i+1\textrm{ and } a_j - a_i\textrm{ (mod $n$)} < a_{i+1} - a_i\textrm{ (mod $n$)}\ \right\}\right\vert
\end{align*}
In this condition, the requirement that $j > i + 1$ means that $a_j$ is an unused element, and the requirement that $a_j - a_i\textrm{ (mod $n$)} < a_{i+1} - a_i\textrm{ (mod $n$)}$ means that starting from $a_i$, $a_j$ is reached before $a_{i+1}$.
These two requirements imply that $a_{i+1}$ is within the next $\sqrt{n}$ unused variables following $a_i$.
This condition is equivalent to the following.
\begin{align*}
\sqrt{n}
&\geq
\begin{cases}
\sum_{j=i+2}^{n}\mathbb{I}[a_i < a_j < a_{i+1}] & \textrm{if $a_i < a_{i+1}$} \\
\sum_{j=i+2}^{n}\mathbb{I}[a_i < a_j \textrm{ or } a_j < a_{i+1}] & \textrm{if $a_i > a_{i+1}$} \\
\end{cases}
\end{align*}

\todo{In addition, all variables have to be sampled once afterwards. (this is an artifact of the Gibbs sampler -- should I actually include this?). This doesn't affect the $O(\cdot)$ of the mixing times at all -- it just adds some lower order terms.}

\paragraph{Random Scan}
First, consider when there are at least $\sqrt{n}$ remaining transitions.
The first $n - \sqrt{n} = O(n)$ transitions will have this property.
During this period, random scan advances to the next state with probability $n / \sqrt{n} = \sqrt{n}$.
Thus, $O(n\sqrt{n})$ steps are needed to get until there are fewer than $\sqrt{n}$ remaining transitions.
Once there are fewer than $\sqrt{n}$ remaining transitions, random scan needs $O(n)$ steps to make each transition, so $O(n\sqrt{n})$ more steps are needed.
In total, $O(n\sqrt{n})$ steps are needed.

\paragraph{Best Systematic Scan}
The best systematic scan uses the order $x_1,\ldots,x_n$.
This uses $n$ steps to give a valid permutation, and thus mixes in $n$ steps.

\paragraph{Worst Systematic Scan}
The worst systematic scan uses the order $x_n,x_{n-1},\ldots,x_1$.
After a transition is taken, nearly a full scan is needed until the next valid transition will be reached, so $O(n^2)$ steps are needed.

\paragraph{State Space}

The state space has a similar form to the memorize portion of the previous example.
The states are labeled by the portion of the permutation that has already been given, but only prefixes to valid permutations are accepted.

\paragraph{Formulation as a Gibbs Sampler}
There are $n$ integer-valued variables $x_1,\ldots,x_n$ with values ranging from $0$ to $n$.
In the initial state that does not have anything memorized, all variables have a value of 0.
Then, as the permutation is memorized, the variables that have not been used yet have a value of 0, and any variable that has been used stores its index within the permutation.

In addition, to merge the permutations together into one state, all variables need to be sampled one more time (note that this does not require the variables to be sampled in a particular order).
This extra sampling process only adds lower order terms and constant factors to the mixing time.
Once each variable is sampled again, its value is changed to $n$.

The log probability distribution is the following, where $M$ is a very large constant, $Z$ is the normalizing constant, and $\sigma(a_1,\ldots,a_n)$ denotes the set of all permutations of $(a_1,\cdots,a_n)$.
\begin{align*}
\log_M P(x)
&= -\log_M Z + 
\begin{cases}
\max(x_1,\ldots,x_n) & \textrm{if $(x_1,\ldots,x_n)\in\sigma(0,\ldots,0,\ 1,\ldots, i)$ and is valid} \\
n - 1 + \sum_{i=1}^{n}\mathbb{I}[x_i = n] & \textrm{if at least two of $(x_1,\ldots,x_n)$ have a value of $n$} \\
-\infty & \textrm{otherwise}
\end{cases} \\
&= -\log_M Z + 
\begin{cases}
\left(\textrm{Number of Memorized Variables}\right) & \textrm{if valid memorize state} \\
n - 1 + \left(\textrm{Number of Merged Variables}\right) & \textrm{if valid merge state} \\
-\infty & \textrm{otherwise}
\end{cases}
\end{align*}

First, in the memorize phase, sampling a valid unused variable will almost always change its value to $i+1$, where $i$ is the number of variables already memorized, due to the large value of $M$.
However, sampling a variable that has already been memorized or sampling a variable that least to an invalid permutation will not change its value.
Then, when the last variable in the permutation is sampled, its value changes to $n$.
Afterwards, sampling any variable will also change its value to $n$.

\section{Priors in the Sequence of Dependencies}
\label{app:priors}

To simplify the analysis of the sequence of dependencies, we assumed that the priors on the variables were strong enough such sampling the correct variable essentially always caused the state to advance.
We now analyze how large $M$ (the strength of the priors) needs to be in order to allow the best systematic scan to mix in $O(n)$ time.
In this section, we show that if $M = \Omega(n)$, then the best systematic scan mixes in $O(n)$ time, and if $M = o(n)$, then the best systematic scan cannot mix $O(n^2)$ time.

First, suppose that $M = \Omega(n)$.
This means that $M \geq cn$ for some $c > 0$ and sufficiently large $n$.
The probability of transitioning from $s_{i-1}$ to $s_i$ when variable $x_i$ is sampled is
\begin{align*}
\frac{M}{1 + M} \geq \frac{cn}{1 + cn} = 1 - \frac{1}{1 + cn} > 1 - \frac{1}{cn}.
\end{align*}
The probability of transitioning from $s_0$ to $s_n$ after sampling the sequence $x_1,\ldots,x_n$ is at least
$(1 - 1/cn)^n$, which limits to $e^{-1/c}$ for large $n$.
In other words, if $M = \Omega(n)$, then a single sweep of the best systematic scan will reach $s_n$ with a probability that does not approach 0.
Thus, a constant number of sweeps is sufficient to reach $s_n$ with high probability, which is equivalent to mixing.

On the other hand, suppose that $M = o(n)$.
Then, for any $c>0$, $M < cn$ for sufficiently large $n$.
This means that as $n$ increases, the probability that a single sweep of the best systematic scan will reach $s_n$ will go to 0, so no constant number of sweeps will be sufficient to mix.

\section{Bridge Efficiency with Normal Mass on Bridge}
\label{app:bridge}

In the analysis of the two islands model, we assumed that the bridge has negligible mass.
We now analyze the mixing times without the assumption that the bridge has small mass.
The same asymptotic behavior still results.

Even when the bridge has the same mass as the other states, it still acts as a bottleneck to the model.
For sufficiently large $n$, the islands will mix rapidly in comparison reaching the bridge, so the mixing time is still inversely proportional to the bridge efficiency.
However, when the bridge has the same mass as the other states, sampling a variable while on the bridge will only have a $1/2$ chance of moving off of the bridge.

\paragraph{Random Scan}
In random scan, the variables that are sampled are completely independent, so the bridge efficiency is still $1/2$.

\paragraph{Best Systematic Scan}
Consider the scan $x_1,y_1,x_2,y_2,\ldots$.
Suppose the sampling $x_1$ changes the state to the bridge state -- a similar analysis will apply for any other variable.
The next variable is $y_1$, which will change the state to the other island with probability $1/2$.
Afterwards, $x_2$ is sampled, which will change the state to the same island with probability $1/4$.
Then, $y_2$ is sampled, which will change the state to the other island with probability $1/8$.
Thus, the probability of moving to the other island is $1/2 + 1/8 + 1/32 + \ldots = 2/3$.

\paragraph{Worst Systematic Scan}
Consider the scan $x_1,\ldots,x_n,\ y_1,\ldots,y_n$.
Suppose that sampling $x_n$ changes the state to the bridge state.
For large $n$, the probability of leaving the bridge state onto island $y$ approaches 1.
Next, suppose that sampling $x_{n-1}$ changes the state to the bridge state.
There is a probability of $1 / 2$ of moving back to island $x$ via $x_n$, but the chain will move onto island $y$ otherwise.
In general, moving onto the bridge via $x_{i}$ or $y_{i}$ will result in moving to the island with probability $2^{-n + i}$.
The average probability of moving onto the other island is then $2/n$.

\section{Proofs for Section \ref{sec:bounds}}
\label{app:proofs}

In this section, we prove our relative bounds on mixing times (Theorem \ref{thm:mix}), along with related claims and lemmas.
\begin{claim}
The stationary distribution of augmented random and systematic scan is
\begin{align*}
\pi\left((x,i)\right) = \frac{1}{n}\pi(x)
\end{align*}
\end{claim}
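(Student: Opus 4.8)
The plan is to verify directly that the proposed distribution $\hat\pi((x,i)) = n^{-1}\pi(x)$ satisfies the stationarity equation $\hat\pi P = \hat\pi$ for each of the two augmented chains. The single fact I will lean on is that every individual resampling step preserves the target distribution: for each variable index $i$, the transition matrix $P_i$ satisfies $\sum_{x} \pi(x) P_i(x,y) = \pi(y)$, i.e.\ $\pi P_i = \pi$. This is the defining invariance property of a Gibbs update, and it holds because $P_i$ resamples coordinate $i$ from its exact conditional $\pi(\cdot \mid x_{-i})$ while leaving the other coordinates fixed, so in fact $P_i$ is reversible with respect to $\pi$.

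For augmented systematic scan, I would compute $\sum_{(x,i)} \hat\pi((x,i)) P((x,i),(y,j))$. Substituting $P((x,i),(y,j)) = P_i(x,y) s(i,j)$ and using that the indicator $s(i,j) = \mathbb{I}[i+1 \equiv j \pmod n]$ is nonzero for exactly one index, namely $i \equiv j-1 \pmod n$, the sum over $i$ collapses to that single term. What remains is $n^{-1}\sum_x \pi(x) P_{j-1}(x,y)$ (with the index read mod $n$), which equals $n^{-1}\pi(y) = \hat\pi((y,j))$ by the invariance $\pi P_{j-1} = \pi$.

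For augmented random scan, the same starting sum with $P((x,i),(y,j)) = n^{-1} P_i(x,y)$ does not collapse; instead all $n$ values of $i$ contribute equally. The computation gives $n^{-2}\sum_{i=1}^n \sum_x \pi(x) P_i(x,y) = n^{-2}\sum_{i=1}^n \pi(y) = n^{-1}\pi(y) = \hat\pi((y,j))$, again using $\pi P_i = \pi$ for each $i$. Since $\hat\pi$ is nonnegative and sums to one over $\Psi = \Omega \times [n]$, it is a genuine probability distribution, and this confirms stationarity for both chains.

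The only real content is the invariance $\pi P_i = \pi$; everything else is bookkeeping over the index coordinate. I expect the main obstacle --- to the extent there is one --- to be making this single-update invariance airtight, which amounts to checking detailed balance $\pi(x) P_i(x,y) = \pi(y) P_i(y,x)$ for coordinate-$i$ resampling together with the observation that $P_i(x,y) = 0$ unless $x$ and $y$ agree off coordinate $i$. Once that is in place, the index coordinate merely carries the uniform factor $n^{-1}$ through both chains, in one case selected deterministically by $s(i,j)$ and in the other uniformly.
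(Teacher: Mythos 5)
Your proposal is correct and matches the paper's own proof: both verify stationarity directly by computing $\sum_{(x,i)}\hat\pi((x,i))P((x,i),(y,j))$ for each augmented chain and invoking the single-update invariance $\sum_x \pi(x)P_i(x,y)=\pi(y)$, with the index sum collapsing via $s(i,j)$ for systematic scan and averaging uniformly for random scan. The only difference is that you also sketch why $\pi P_i=\pi$ holds, which the paper takes as given.
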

\begin{proof}
We prove this claim by showing that applying the transition matrix for augmented random scan or augmented systematic scan does not change this distribution.

For augmented random scan,
\begin{align*}
\sum_{x,i}\pi((x,i))P((x,i),(y,j))
&= \sum_{x,i}\frac{\pi(x)}{n}\cdot P((x,i),(y,j)) 
 = \frac{1}{n}\sum_{x,i}\pi(x)\cdot\frac{1}{n}P_i(x,y) \\
&= \frac{1}{n}\sum_{i=1}^{n}\left[\sum_{x\in\Omega}\left(\pi(x)P_i(x,y)\right)\cdot\frac{1}{n}\right] \\
&= \frac{1}{n}\sum_{i=1}^{n}\left[\pi(y)\cdot\frac{1}{n}\right]
 = \frac{1}{n}\pi(y) \\
&= \pi((y,j))
\end{align*}

For augmented systematic scan,
\begin{align*}
\sum_{x,i}\pi((x,i))P((x,i),(y,j))
&= \sum_{x,i}\frac{\pi(x)}{n}\cdot P((x,i),(y,j)) 
 = \frac{1}{n}\sum_{x,i}\pi(x)\cdot P_i(x,y)s(i,j) \\
&= \frac{1}{n}\sum_{i=1}^{n}\left[\sum_{x\in\Omega}\left(\pi(x)P_i(x,y)\right)s(i,j)\right] \\
&= \frac{1}{n}\sum_{i=1}^{n}\left[\pi(y)s(i,j)\right]
 = \frac{1}{n}\pi(y) \\
&= \pi((y,j))
\end{align*}
\end{proof}



\thmcond*
\begin{proof}
\item
\paragraph{Upper Bound:}
The conductance of the whole chain is the smallest conductance of any set with mass no larger than $\frac{1}{2}$.
Then, to prove that this inequality holds, we will show that for any set $S\in\Omega$ with mass no larger than $\frac{1}{2}$, there exists a set $T\in\Psi$ with mass no larger than $\frac{1}{2}$ such that the conductance of $S$ under random scan is the same as the conductance of $T$ under augmented systematic scan.

From the standpoint of random scan, consider a set $S\in\Omega$ with mass no larger than $\frac{1}{2}$. 
The conductance is
\begin{align*}
\Phi_{\textrm{RS}}(S)
&= \frac{\sum_{x\in S} \sum_{y\in S^c} \pi(x)P(x,y)}{\pi(S)} \\
&= \frac{\frac{1}{n}\sum_{i=1}^{n}\sum_{x\in S} \sum_{y\in S^c} \pi(x)P_i(x,y)}{\pi(S)}
\end{align*}

Then, for augmented systematic scan, consider the set $T = \{\, (x,i) : x\in S, i\in[n]\, \}$.
First, notice that $\pi(T) = \pi(S) \leq \frac{1}{2}$.
The conductance is
\begin{align*}
\Phi_{\textrm{SS-A}}(T)
&= \frac{\sum_{(x,i)\in T} \sum_{(y,j)\in T^c} \pi\left((x,i)\right)P\left((x,i),(y,j)\right)}{\pi(T)} \\
&= \frac{\sum_{i=1}^{n}\sum_{j=1}^{n}\sum_{x\in S} \sum_{y\in S^c} \pi\left((x,i)\right)P\left((x,i),(y,j)\right)}{\pi(S)} \\
&= \frac{1}{n}\frac{\sum_{i=1}^{n}\sum_{j=1}^{n}\sum_{x\in S} \sum_{y\in S^c} \pi(x)P_i(x,y)s(i,j)}{\pi(S)} \\
&= \frac{1}{n}\frac{\sum_{i=1}^{n}\sum_{x\in S} \sum_{y\in S^c} \pi(x)P_i(x,y)}{\pi(S)} \\
&= \Phi_{\textrm{RS}}(S)
\end{align*}

This implies that for any $S\in\Omega$ with $\pi(S) \leq \frac{1}{2}$, there exists $T\in\Psi$ with $\pi(T) \leq \frac{1}{2}$ such that $\Phi_{\textrm{RS}}(S) = \Phi_{\textrm{SS}}(T)$.
Therefore, $\Phi_{\textrm{SS-A}} \leq \Phi_{\textrm{RS}}$.

\paragraph{Lower Bound:}

In this proof, we will work with the \textit{flow} between two sets
\begin{align*}
Q(A,B) &= \sum_{x\in A, y\in B}\pi(x)P(x,y).
\end{align*}
Notice that the conductance of a set can then be defined as
\begin{align*}
\Phi(S) = \frac{Q(S,S^c)}{\pi(S)}.
\end{align*}

To prove that this inequality holds for the whole chain, we will show that the same inequality holds for any set $S\in\Psi$.
Consider some arbitrary state $x\in \Omega$.
Flow can leave from the corresponding augmented states in two ways: flowing from some $(x,i)\in S$ to $(x,j)\in S^c$ or flowing from $(x,i)\in S$ to $(y,j)\in S^c$, where $y\neq x$ ($x$ and $y$ differ in only variable $i$).
Let $S_x = \{\, (x,i) \in S\, \}$, and let $S^c_x = \{\ (x,i) \in S^c\ \}$.
These two components can be written as $Q(S_x,S^c_x)$ and $\sum_{y\neq x}Q(S_x,S^c_y)$.

Now, we will find upper bounds for the random scan flows and lower bounds for the systematic scan flows.
In the following statements, it is implicit that $y\neq x$, and $\gamma = \min_{x,i}P_i(x,x)$ will denote the minimum holding probability.

First, we bound the amount of flow from $(x,i)\in S$ to $(x,j)\in S^c$.
For augmented random scan, the following upper bound holds.
\begin{align*}
Q_{\textrm{RS}}(S_x,S^c_x)
&= \sum_{(x,i)\in S_x,(x,j)\in S^c_x} \pi((x,i))P((x,i),(x,j)) \\
&= \sum_{(x,i)\in S_x,(x,j)\in S^c_x} \frac{1}{n}\pi(x)\cdot \frac{1}{n}P_i(x,x) \\
&\leq \sum_{(x,i)\in S_x,(x,j)\in S^c_x} \frac{1}{n}\pi(x)\cdot \frac{1}{n} \\
&= \frac{|S_x|}{n}\pi(x)\frac{n - |S_x|}{n} \\
&\leq
\begin{cases}
\frac{1}{4}\pi(x) & \textrm{if $|S_x|\neq 0,n$} \\
0                 & \textrm{if $|S_x|= 0,n$}
\end{cases}
\end{align*}
For augmented systematic scan, the following lower bound holds.
\begin{align*}
Q_{\textrm{SS}}(S_x,S^c_x)
&= \sum_{(x,i)\in S_x,(x,j)\in S^c_x} \pi((x,i))P((x,i),(x,j)) \\
&= \sum_{(x,i)\in S_x,(x,j)\in S^c_x} \frac{1}{n}\pi(x)\cdot P_i(x,x)s(i,j) \\
&\geq \sum_{(x,i)\in S_x,(x,j)\in S^c_x} \frac{1}{n}\pi(x) \cdot \gamma\cdot s(i,j) \\
&\geq
\begin{cases}
\frac{1}{n}\pi(x)\gamma & \textrm{if $|S_x|\neq 0,n$} \\
0                       & \textrm{if $|S_x|= 0,n$}
\end{cases}
\end{align*}

Similarly,
\begin{align*}
Q_{\textrm{SS}}(S_y,S^c_y) &\geq
\begin{cases}
\frac{1}{n}\pi(y)\gamma & \textrm{if $|S_y|\neq 0,n$} \\
0                       & \textrm{if $|S_y|= 0,n$}
\end{cases}
\end{align*}

Now, we bound the amount of flow from $x$ to $y$ for $y\neq x$.
Note that $P_i(x,y) = 0$ for all $i$ if $x$ and $y$ differ in more than one variable.
As a result, we will assume that $x$ and $y$ differ in only variable $i$ for the next two bounds.
For augmented random scan, the following upper bound holds. 
\begin{align*}
Q_{\textrm{RS}}(S_x,S^c_y)
&=
\begin{cases}
\frac{1}{n}\pi(x)P_i(x,y)\frac{n-|S_y|}{n} & \textrm{if $(x,i)\in S$} \\
0 & \textrm{if $(x,i)\not\in S$}
\end{cases} \\
&\leq
\begin{cases}
\frac{1}{n}\pi(x)P_i(x,y) & \textrm{if $(x,i)\in S$ and $|S_y^c|\neq 0$} \\
0 & \textrm{if $(x,i)\not\in S$ or $|S_y^c|=0$}
\end{cases}
\end{align*}

In the derivation of the next bound, note that if $|S^c_y| = n$, then we are guaranteed that $(y,i+1 (\textrm{mod }n))\in S^c_y$.
For augmented systematic scan, the following lower bound holds.
\begin{align*}
Q_{\textrm{SS}}(S_x,S^c_y)
&=
\begin{cases}
\frac{1}{n}\pi(x)P_i(x,y) & \textrm{if $(x,i)\in S$ and $(y,i+1)\in S^c$} \\
0 & \textrm{otherwise}
\end{cases} \\
&\geq
\begin{cases}
\frac{1}{n}\pi(x)P_i(x,y) & \textrm{if $(x,i)\in S$ and $|S^c_y| = n$} \\
0 & \textrm{otherwise}
\end{cases} \\
\end{align*}

Now, we can derive several inequalities between the augmented random scan flow and the augmented systematic scan flow as direct consequences of the bounds we just found.
First, we bound the relative flows from $S_x$ to $S_x^c$.
\begin{align*}
Q_{\textrm{SS}}(S_x,S^c_x)
&\geq \frac{4\gamma}{n}Q_{\textrm{RS}}(S_x,S^c_x)
 \geq \frac{\gamma}{n}Q_{\textrm{RS}}(S_x,S^c_x)
\end{align*}

Next, we bound the relative flows from $x$ to $y$, where $x$ and $y$ differ in exactly variable $i$.
\begin{align*}
&\phantom{{}={}}
Q_{\textrm{SS}}(S_x,S^c_y) + \frac{1}{n}P_i(y,x)Q_{\textrm{SS}}(S_y,S_y^c) \\
&\geq
\begin{cases}
\frac{1}{n}\pi(x)P_i(x,y) & \textrm{if $(x,i)\in S$ and $|S^c_y| = n$} \\
\frac{1}{n}P_i(y,x)\cdot\frac{1}{n}\pi(y)\gamma & \textrm{if $(x,i)\in S$ and $|S^c_y| \neq 0,n$} \\
0 & \textrm{otherwise}
\end{cases} \\
&=
\begin{cases}
\frac{1}{n}\pi(x)P_i(x,y) & \textrm{if $(x,i)\in S$ and $|S^c_y| = n$} \\
\frac{1}{n^2}\pi(x)P_i(x,y)\gamma & \textrm{if $(x,i)\in S$ and $|S^c_y| \neq 0,n$} \\
0 & \textrm{otherwise}
\end{cases} \\
&\geq
\begin{cases}
\frac{1}{n^2}\pi(x)P_i(x,y)\gamma & \textrm{if $(x,i)\in S$ and $|S_y^c| \neq 0$} \\
0 & \textrm{otherwise}
\end{cases} \\
&=
\frac{\gamma}{n}
\begin{cases}
\frac{1}{n}\pi(x)P_i(x,y) & \textrm{if $(x,i)\in S$ and $|S_y^c| \neq 0$} \\
0 & \textrm{otherwise}
\end{cases} \\
&\geq \frac{\gamma}{n}\Phi_{\textrm{RS}}(S_x,S^c_y)
\end{align*}

Finally, we bound the relative flows.
\begin{align*}
Q_{\textrm{SS}}(S,S^c)
&= \sum_{x\in\Omega}\sum_{y\in\Omega} Q_{\textrm{SS}}(S_x,S^c_y)
 = \sum_{x\in\Omega}\left(Q_{\textrm{SS}}(S_x,S^c_x) + \sum_{y\neq x} Q_{\textrm{SS}}(S_x,S^c_y)\right) \\
&\geq \frac{1}{2}\sum_{x\in\Omega}\left(2Q_{\textrm{SS}}(S_x,S^c_x) + \sum_{y\neq x} Q_{\textrm{SS}}(S_x,S^c_y)\right) \\
& \frac{1}{2}\sum_{x\in\Omega}\left(Q_{\textrm{SS}}(S_x,S^c_x) + \sum_{y\in\Omega} \sum_{i=1}^{n}\frac{1}{n}P_i(y,x)Q_{\textrm{SS}}(S_y,S^c_y) + \sum_{y\neq x} Q_{\textrm{SS}}(S_x,S^c_y)\right) \\
&\geq \frac{1}{2}\sum_{x\in\Omega}\left(Q_{\textrm{SS}}(S_x,S^c_x) + \sum_{y\neq x} \left(Q_{\textrm{SS}}(S_x,S^c_y) + \sum_{i=1}^{n}\frac{1}{n}P_i(y,x)Q_{\textrm{SS}}(S_y,S^c_y)\right)\right) \\
&\geq \frac{1}{2}\sum_{x\in\Omega}\left(\frac{\gamma}{n}Q_{\textrm{RS}}(S_x,S^c_x) + \frac{\gamma}{n}\sum_{y\neq x} Q_{\textrm{RS}}(S_x,S^c_y)\right) \\
&\geq \frac{\gamma}{2n}\sum_{x\in\Omega}\left(Q_{\textrm{RS}}(S_x,S^c_x) + \sum_{y\neq x} Q_{\textrm{RS}}(S_x,S^c_y)\right) \\
&= \frac{\gamma}{2n}Q_{\textrm{SS}}(S,S^c)
\end{align*}

The mass of $S$ is the same for augmented random scan and augmented systematic scan, so the same inequality holds for the conductances of the sets.
Finally, because this inequality holds for any set $S$, the inequality also holds for the conductance of the whole chain.
\end{proof}


\levinthm*
\begin{proof}
The lower bound of this inequality,
\begin{align*}
\frac{1/2 - \epsilon}{\Phi_\star}
\leq
t_{\textrm{mix}}(\epsilon)
\end{align*}
is is shown by Theorem 7.3 of \cite{levin2009ams} and holds for any Markov chain -- that is, it does not actually require the Markov chain to be lazy or reversible.

The upper bound of this inequality,
\begin{align*}
t_{\textrm{mix}}(\epsilon)
\leq \frac{2}{\Phi_\star^2}\log\left(\frac{1}{\epsilon\pi_{\textrm{min}}}\right),
\end{align*}
is shown by Theorem 17.10 of \cite{levin2009ams}.
\end{proof}

\thmtmix*
\begin{proof}
\item
\paragraph{Upper Bound for Random Scan:}
First, we upper bound the mixing time of random scan.
\begin{align*}
t_{\textrm{mix}}(R,\epsilon) &\leq \frac{2}{\Phi_{\textrm{RS}}^2}\log\left(\frac{1}{\epsilon\pi_{\textrm{min}}}\right)
\end{align*}

Next, we lower bound the mixing time for systematic scan.
\begin{align*}
t_{\textrm{mix}}(S,\epsilon) &\geq \frac{1/2-\epsilon}{\Phi_{\textrm{SS-A}}}
                              \geq \frac{1/2-\epsilon}{\Phi_{\textrm{RS}}}
\end{align*}

This theorem results from algebraic manipulation of the previous two inequalities.
\begin{align*}
t_{\textrm{mix}}^2(S,\epsilon) &\geq \frac{\left(1/2-\epsilon\right)^2}{\Phi^2_{\textrm{RS}}} \\
\left(1/2-\epsilon\right)^2t_{\textrm{mix}}(R,\epsilon)
&\leq
2t_{\textrm{mix}}^2(S,\epsilon)\log\left(\frac{1}{\epsilon\pi_{\textrm{min}}}\right)
\end{align*}

\paragraph{Upper Bound for Systematic Scan:}
First, we lower bound the mixing time for random scan.
\begin{align*}
t_{\textrm{mix}}(R,\epsilon)   &\geq \frac{1/2 - \epsilon}{\Phi_{\textrm{RS-A}}} \\
t_{\textrm{mix}}(R,\epsilon)^2 &\geq \frac{\left(1/2 - \epsilon\right)^2}{\Phi_{\textrm{RS-A}}^2}
\end{align*}

Next, we manipulate the lower bound of Lemma \ref{thm:condssrs}.
\begin{gather*}
\Phi_{\textrm{SS}} \geq \frac{1}{2n}\cdot\min_{x,i}P_i(x,x)\cdot\Phi_{\textrm{RS-A}} \\
\frac{1}{\Phi_{\textrm{SS}}^2} \leq \frac{4n^2}{\left(\min_{x,i}P_i(x,x)\right)^2}\cdot\frac{1}{\Phi_{\textrm{RS-A}}^2} \\
\end{gather*}

Using this result, we upper bound the mixing time for systematic scan.
\begin{align*}
t_{\textrm{mix}}(S,\epsilon)
\leq \frac{2}{\Phi_{\textrm{SS}}^2}\log\left(\frac{1}{\epsilon\pi_{\textrm{min}}}\right)
\leq \frac{8n^2}{\left(\min_{x,i}P_i(x,x)\right)^2}\cdot\frac{1}{\Phi_{\textrm{RS}}^2}\log\left(\frac{1}{\epsilon\pi_{\textrm{min}}}\right)
\end{align*}

This theorem results from the previous inequalities.
\begin{gather*}
\left(1/2 - \epsilon\right)^2t_{\textrm{mix}}(S,\epsilon)
\leq
\frac{8n^2}{\left(\max_{x,i}P_i(x,x)\right)^2}t_{\textrm{mix}}(R,\epsilon)^2\log\left(\frac{1}{\epsilon\pi_{\textrm{min}}}\right)
\end{gather*}
\end{proof}








}

\end{document}